\newtheorem{definition}{Definition}
\newtheorem{theorem}{Theorem}
\newtheorem{lemma}{Lemma}
\newcommand{\R}{\mathbb{R}}
\title{Going Beyond Linear Mode Connectivity:\\The Layerwise Linear Feature Connectivity}
\author{Zhanpeng Zhou\textsuperscript{1}, Yongyi Yang\textsuperscript{2}, Xiaojiang Yang\textsuperscript{1}, Junchi Yan\textsuperscript{1}\thanks{Corresponding author. SJTU authors are partly supported by NSFC(62222607, 61972250, U19B2035) and Shanghai Municipal Science and Technology Major Project (2021SHZDZX0102). \normalsize}, Wei Hu\textsuperscript{2}\thanks{Wei Hu acknowledges support from the Google Research Scholar Program. \normalsize}\\
  \textsuperscript{1} Dept. of Computer Science and Engineering \& MoE Key Lab of AI, Shanghai Jiao Tong University\\ \textsuperscript{2} Dept. of Electrical Engineering \& Computer Science, University of Michigan\\
  \texttt{\{zzp1012,yangxiaojiang,yanjunchi\}@sjtu.edu.cn}\\ \texttt{\{yongyi,vvh\}@umich.edu}
}
\begin{document}

\maketitle

\begin{abstract}
  Recent work has revealed many intriguing empirical phenomena in neural network training, despite the poorly understood and highly complex loss landscapes and training dynamics. One of these phenomena, Linear Mode Connectivity (LMC), has gained considerable attention due to the intriguing observation that different solutions can be connected by a linear path in the parameter space while maintaining near-constant training and test losses. In this work, we introduce a stronger notion of linear connectivity, \emph{Layerwise Linear Feature Connectivity (LLFC)}, which says that the feature maps of every layer in different trained networks are also linearly connected. We provide comprehensive empirical evidence for LLFC across a wide range of settings, demonstrating that whenever two trained networks satisfy LMC (via either spawning or permutation methods), they also satisfy LLFC in nearly all the layers. Furthermore, we delve deeper into the underlying factors contributing to LLFC, which reveal new insights into the permutation approaches. The study of LLFC transcends and advances our understanding of LMC by adopting a feature-learning perspective. We released our source code at \url{https://github.com/zzp1012/LLFC}.
\end{abstract}

\section{Introduction}
Despite the successes of modern deep neural networks, theoretical understanding of them still lags behind. Efforts to understand the mechanisms behind deep learning have led to significant interest in exploring the loss landscapes and training dynamics. While the loss functions used in deep learning are often regarded as complex black-box functions in high dimensions, it is believed that these functions, particularly the parts encountered in practical training trajectories, contain intricate benign structures that play a role in facilitating the effectiveness of gradient-based training \citep{xu2023benign,cao2022benign,pmlr-v202-zhu23h}. Just like in many other scientific disciplines, a crucial step toward formulating a comprehensive theory of deep learning lies in meticulous empirical investigations of the learning pipeline, intending to uncover quantitative and reproducible nontrivial phenomena that shed light on the underlying mechanisms.

One intriguing phenomenon discovered in recent work is Mode Connectivity~\citep{freeman2017topology, draxler2018essentially, garipov2018loss}: Different optima found by independent runs of gradient-based optimization are connected by a simple path in the parameter space, on which the loss or accuracy is nearly constant. This is surprising as different optima of a non-convex function can very well reside in different and isolated ``valleys” and yet this does not happen for optima found in practice. More recently, an even stronger form of mode connectivity called Linear Mode Connectivity (LMC) was discovered~\citep{nagarajan2019uniform,frankle2020linear}. It depicts that different optima can be connected by a \emph{linear} path of constant loss/accuracy. Although LMC typically does not happen for two independently trained networks, it has been consistently observed in the following scenarios:
\begin{itemize}
[topsep=0in,leftmargin=0em,wide=0em]
    \item \textbf{Spawning}~\citep{frankle2020linear,fort2020deep}: A network is randomly initialized, trained for a small number of epochs (e.g. 5 epochs for both ResNet-20 and VGG-16 on the CIFAR-10 dataset), and then spawned into two copies which continue to be independently trained using different SGD randomnesses (i.e., for mini-batch order and data augmentation).
    \item \textbf{Permutation}~\citep{entezari2022the,ainsworth2023git}: Two networks are independently trained, and the neurons of one model are permuted to match the neurons of the other model while maintaining a functionally equivalent network.
\end{itemize}
The study of LMC is highly motivated due to its ability to unveil nontrivial structural properties of loss landscapes and training dynamics. Furthermore, LMC has significant relevance to various practical applications, such as pruning and weight-averaging methods.

\begin{figure}[tb!]
    \centering
    \includegraphics[width=\textwidth]{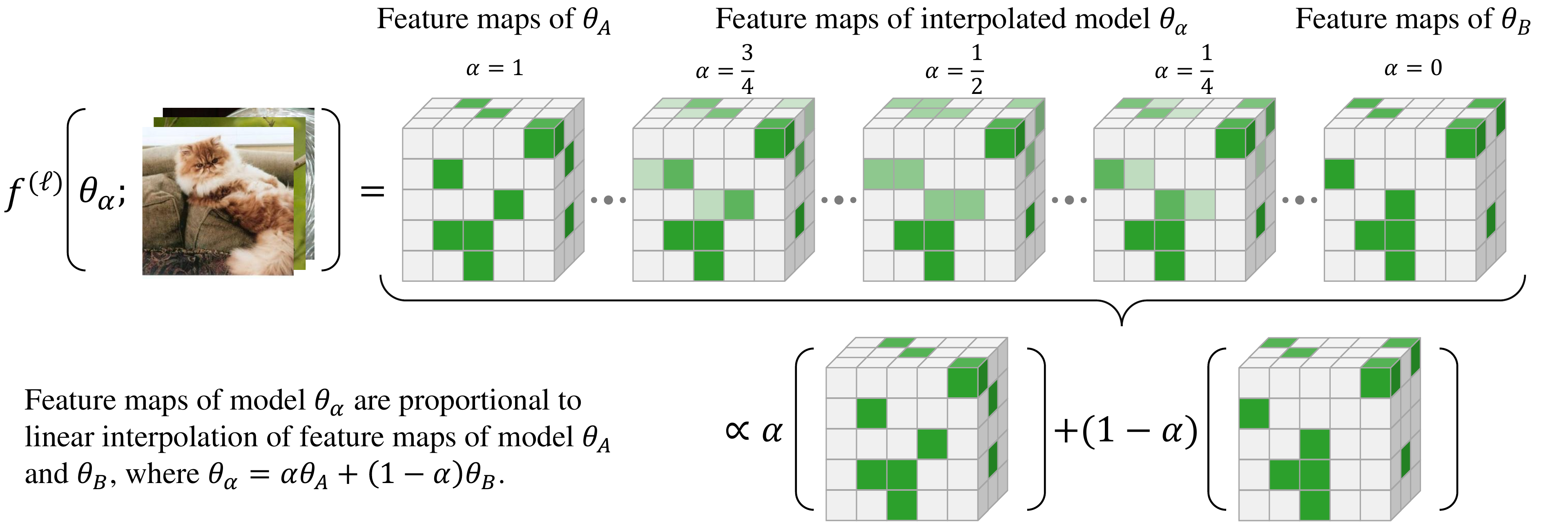}
    \caption{Illustration of Layerwise Linear Feature Connectivity (LLFC). Each tensor comprises three feature maps when provided with three input images. }
    \label{fig:overview}
\end{figure}

On the other hand, the success of deep neural networks is related to their ability to learn useful \emph{features}, or \emph{representations}, of the data~\citep{rumelhart1985learning}, and recent work has highlighted the importance of analyzing not only the final outputs of a network but also its intermediate features~\citep{li2015convergent}. However, this crucial perspective is absent in the existing literature on LMC and weight averaging. These studies typically focus on interpolating the weights of different models and examining the final loss and accuracy, without delving into the internal layers of the network.

In this work, we take a feature-learning perspective on LMC and pose the question: what happens to the internal features when we linearly interpolate the weights of two trained networks? Our main discovery, referred to as \emph{Layerwise Linear Feature Connectivity (LLFC)}, is that the features in almost all the layers also satisfy a strong form of linear connectivity: the feature map in the weight-interpolated network is approximately the same as the linear interpolation of the feature maps in the two original networks. More precisely, let $\boldsymbol \theta_A$ and $\boldsymbol \theta_B$ be the weights of two trained networks, and let $f^{(\ell)}(\boldsymbol \theta)$ be the feature map in the $\ell$-th layer of the network with weights $\boldsymbol \theta$. Then we say that $\boldsymbol \theta_A$ and $\boldsymbol \theta_B$ satisfy LLFC if
\begin{equation} \label{eqn:llfc-intro}
    f^{(\ell)}( \alpha {\boldsymbol{\theta}}_A + (1-\alpha) {\boldsymbol{\theta}}_B ) \propto \alpha f^{(\ell)}({\boldsymbol{\theta}}_A) + (1-\alpha) f^{(\ell)}({\boldsymbol{\theta}}_B), \qquad \forall \alpha\in[0,1], \forall \ell.
\end{equation}
See \cref{fig:overview} for an illustration.
While LLFC certainly cannot hold for two arbitrary $\boldsymbol \theta_A$ and $\boldsymbol \theta_B$, we find that it is satisfied whenever $\boldsymbol \theta_A$ and $\boldsymbol \theta_B$ satisfy LMC. We confirm this across a wide range of settings, as well as for both spawning and permutation methods that give rise to LMC.

LLFC is a much finer-grained characterization of linearity than LMC. While LMC only concerns loss or accuracy, which is a single scalar value, LLFC~\eqref{eqn:llfc-intro} establishes a relation for all intermediate feature maps, which are high-dimensional objects. Furthermore, it is not difficult to see that LLFC applied to the output layer implies LMC when the two networks have small errors (see \cref{lem:LLFC-implies-LMC}); hence, LLFC can be viewed as a strictly stronger property than LMC. The consistent co-occurrence of LLFC and LMC suggests that studying LLFC may play a crucial role in enhancing our understanding of LMC.

Subsequently, we delve deeper into the underlying factors contributing to LLFC. We identify two critical conditions, \emph{weak additivity for ReLU function} and a \emph{commutativity property} between two trained networks. We prove that these two conditions collectively imply LLFC in ReLU networks, and provide empirical verification of these conditions. Furthermore, our investigation yields novel insights into the permutation approaches: we interpret both the activation matching and weight matching objectives in Git Re-Basin~\citep{ainsworth2023git} as ways to ensure the satisfaction of commutativity property.

In summary, our work unveils a richer set of phenomena that go significantly beyond the scope of LMC, and our further investigation provides valuable insights into the underlying mechanism behind LMC. Our results demonstrate the value of opening the black box of neural networks and taking a feature-centric viewpoint in studying questions related to loss landscapes and training dynamics.

\section{Related Work}

\textbf{(Linear) Mode Connectivity.}
\citet{freeman2017topology, draxler2018essentially, garipov2018loss} observed Mode Connectivity, i.e., different optima/modes of the loss function can be connected through a non-linear path with nearly constant loss. 
\citet{nagarajan2019uniform} first observed Linear Mode Connectivity (LMC), i.e., the near-constant-loss connecting path can be linear, on models trained on MNIST starting from the same random initialization.
Later, \citet{frankle2020linear} first formally defined and thoroughly investigate the LMC problem. \citet{frankle2020linear} observed LMC on harder datasets, for networks that are jointly trained for a short amount of time before going through independent training (we refer to this as the {spawning method}). \citet{frankle2020linear} also demonstrated a connection between LMC and the Lottery Ticket Hypothesis~\citep{frankle2018lottery}.
\citet{fort2020deep} used the {spawning method} to explore the connection between LMC and the Neural Tangent Kernel dynamics.
\citet{lubana2023mechanistic} studied the mechanisms of DNNs from mode connectivity and verified the mechanistically dissimilar models cannot be linearly connected.
\citet{yunis2022on} showed that LMC also extends beyond two optima and identified a high-dimensional convex hull of low loss between multiple optima.
On the theory side, several papers~\citep{freeman2017topology,liang2018understanding,venturi2018spurious,nguyen2018loss,nguyen2019connected,NEURIPS2019_46a4378f} were able to prove non-linear mode connectivity under various settings, but there has not been a theoretical explanation of LMC to our knowledge.

\textbf{Permutation Invariance.} Neural network architectures contain permutation symmetries~\citep{HechtNielsen1990ONTA}: one can permute the weights in different layers while not changing the function computed by the network. \citet{ashmore2015method} utilized the permutation invariance of DNNs to align the topological structure of two neural networks. \citet{tatro2020optimizing} used permutation invariance to align the neurons of two neural networks, resulting in improved non-linear mode connectivity.
In the context of LMC, \citet{entezari2022the,ainsworth2023git} showed that even independently trained networks can be linearly connected when permutation invariance is taken into account. In particular, \citet{ainsworth2023git} approached the neuron alignment problem by formulating it as bipartite graph matching and proposed two matching methods: {activation matching} and {weight matching}. Notably, \citet{ainsworth2023git} achieved the LMC between independently trained ResNet models on the CIFAR-10 dataset using weight matching. 

\textbf{Model Averaging Methods.} LMC also has direct implications for model averaging methods, which are further related to federated learning and ensemble methods. \citet{Wang2020Federated} introduced a novel federated learning algorithm that incorporates unit permutation before model averaging. \citet{singh2020model,DBLP:conf/icml/LiuLWXSY22} approached the neuron alignment problem in model averaging by formulating it as an optimal transport and graph matching problem, respectively.
\citet{wortsman2022model} averaged the weights of multiple fine-tuned models trained with different hyper-parameters and obtained improved performance.

\section{Background and Preliminaries} \label{sec:preliminaries}
\textbf{Notation and Setup.} Denote $[k]=\{1,2,\ldots,k\}$.
We consider a classification dataset $\mathcal D = \{({\boldsymbol{x}}_i, y_i)\}_{i=1}^n$, where ${\boldsymbol{x}}_i \in \mathbb R^{d_{0}}$ represents the input and $y_i \in [c]$ represents the label of the $i$-th data point. Here, $n$ is the dataset size, ${d_{0}}$ is the input dimension and $c$ is the number of classes. Moreover, we use ${\boldsymbol{X}} \in \mathbb R^{d_{0} \times n}$ to stack all the input data into a matrix.

We consider an $L$-layer neural network of the form $f(\boldsymbol{\theta}; \boldsymbol{x})$, where ${\boldsymbol{\theta}}$ represents the model parameters, $\boldsymbol{x}$ is the input, and $f(\boldsymbol{\theta}; \boldsymbol{x})\in\R^c$. Let the $\ell$-th layer feature (post-activation) of the network be $f^{(\ell)}(\boldsymbol{\theta}; \boldsymbol{x}) \in \R^{d_\ell}$, where $d_\ell$ is the dimension of the $\ell$-th layer ($0\le \ell \le L$) and $d_L=c$. Note that $f^{(0)}(\boldsymbol{\theta}; \boldsymbol{x}) = \boldsymbol{x}$ and $f^{(L)}(\boldsymbol{\theta}; \boldsymbol{x}) = f(\boldsymbol{\theta}; \boldsymbol{x})$.
For an input data matrix $\boldsymbol{X}$, we also use $f(\boldsymbol{\theta}; \boldsymbol{X}) \in \R^{c\times n}$ and $f^{(\ell)}(\boldsymbol{\theta}; \boldsymbol{X}) \in \R^{d_{\ell}\times n}$ to denote the collection of the network outputs and features on all the datapoints, respectively.
When $\boldsymbol{X}$ is clear from the context, we simply write $f^{(\ell)}(\boldsymbol{\theta}) = f^{(\ell)}(\boldsymbol{\theta}; \boldsymbol{X})$ and $f(\boldsymbol{\theta}) = f(\boldsymbol{\theta}; \boldsymbol{X})$.
Unless otherwise specified, in this paper we consider models trained on a training set, and then all the investigations are evaluated on a test set.

We use $\operatorname{Err}_{\mathcal D} (\boldsymbol{\theta})$ to denote the classification error of the network $f(\boldsymbol{\theta}; \cdot)$ on the dataset $\mathcal{D}$. 

\textbf{Linear Mode Connectivity (LMC).} We recall the notion of LMC in \cref{def:LMC}.
\begin{definition}[\textbf{Linear Mode Connectivity}]\label{def:LMC}
Given a test dataset $\mathcal D$ and two modes\footnote{Following the terminology in  literature, a \emph{mode} refers to an optimal solution obtained at the end of training.} ${\boldsymbol{\theta}}_A$ and ${\boldsymbol{\theta}}_B$ such that $\operatorname{Err}_{\mathcal D}({\boldsymbol{\theta}}_A) \approx \operatorname{Err}_{\mathcal D}({\boldsymbol{\theta}}_B)$, we say ${\boldsymbol{\theta}}_A$ and ${\boldsymbol{\theta}}_B$ are linearly connected if they satisfy \begin{align}
     \operatorname{Err}_{\mathcal D}(\alpha {\boldsymbol{\theta}}_A + (1 - \alpha) {\boldsymbol{\theta}}_B) \approx \operatorname{Err}_{\mathcal D}({\boldsymbol{\theta}}_A), \qquad \forall \alpha\in[0,1].
\end{align}
\end{definition}
As \cref{def:LMC} shows, ${\boldsymbol{\theta}}_A$ and ${\boldsymbol{\theta}}_B$ satisfy LMC if the error metric on the linear path connecting their weights is nearly constant.
There are two known methods to obtain linearly connected modes, the \emph{spawning method}~\citep{frankle2020linear,fort2020deep} and the \emph{permutation method}~\citep{entezari2022the,ainsworth2023git}.

\textbf{Spawning Method.}
We start from random initialization ${\boldsymbol{\theta}}^{(0)}$ and train the model for $k$ steps to obtain ${\boldsymbol{\theta}}^{(k)}$. Then we create two copies of ${\boldsymbol{\theta}}^{(k)}$ and continue training the two models separately using independent SGD randomnesses (mini-batch order and data augmentations) until convergence.
By selecting a proper value of $k$ (usually a small fraction of the total training steps), we can obtain two linearly connected modes.

\textbf{Permutation Method.}
Due to permutation symmetry, it is possible to permute the weights in a neural network appropriately while not changing the function being computed. Given two modes ${\boldsymbol{\theta}}_A$ and ${\boldsymbol{\theta}}_B$ which are independently trained (and not linearly connected), the permutation method aims to find a permutation $\pi$
such that the permuted mode ${\boldsymbol{\theta}}_B' = \pi({\boldsymbol{\theta}}_B)$ is functionally equivalent to ${\boldsymbol{\theta}}_B$ and that ${\boldsymbol{\theta}}_B'$ and ${\boldsymbol{\theta}}_A$ are linearly connected. In other words, even if two modes are not linearly connected in the parameter space, they might still be linearly connected if permutation invariance is taken into account.

Among existing permutation methods, Git Re-Basin~\citep{ainsworth2023git} is a representative one, which successfully achieved linear connectivity between two independently trained ResNet models on CIFAR-10.
Specifically, a permutation $\pi$ that maintains functionally equivalent network can be formulated by a set of per-layer permutations $\pi = \{\boldsymbol P^{(\ell)}\}_{\ell = 1}^{L-1}$ where $\boldsymbol{P}^{(\ell)}\in\R^{d_\ell\times d_\ell}$ is a permutation matrix.
\citet{ainsworth2023git} proposed two distinct matching objectives for aligning the neurons of independently trained models via permutation: \emph{weight matching} and \emph{activation matching}:
\begin{align}
\text{Weight matching:}& \quad \min_\pi  \| {\boldsymbol{\theta}}_A -  \pi({\boldsymbol{\theta}}_B) \|^2. \label{eq:weight-match}\\
\text{Activation matching:}& \quad \min_\pi \sum_{\ell=1}^{L-1} \|\boldsymbol H_A^{(\ell)} - \boldsymbol{P}^{(\ell)} \boldsymbol{H}_B^{(\ell)} \|_F^2. \label{eq:act-match}
\end{align}
Here, $\boldsymbol{H}_A^{(\ell)} = f^{(\ell)}(\boldsymbol{\theta}_A; \boldsymbol{X})$ is the $\ell$-th layer feature matrix, and $\boldsymbol{H}_B^{(\ell)}$ is defined similarly.

\textbf{Main Experimental Setup.} In this paper, we use both the spawning method and the permutation method to obtain linearly connected modes. Following \citet{frankle2020linear,ainsworth2023git}, we perform our experiments on commonly used image classification datasets MNIST \cite{lecun1998gradient}, CIFAR-10 \cite{krizhevsky2009learning}, and Tiny-ImageNet\cite{le2015tiny}, and with the standard network architectures ResNet-20/50 \cite{kaiming2016residual}, VGG-16 \cite{simonyan2015vgg}, and MLP.  We follow the same training procedures and hyper-parameters as in \citet{frankle2020linear,ainsworth2023git}.
Due to space limit, we defer some of the experimental results to the appendix.  Notice that \cite{ainsworth2023git} increased the width of ResNet by 32 times in order to achieve zero barrier, and we also followed this setting in the experiments of the permutation method.  
The detailed settings and hyper-parameters are also described in \cref{suppl:settings}.

\section{Layerwise Linear Feature Connectivity (LLFC)} \label{sec:llfc}

In this section, we formally describe Layerwise Linear Feature Connectivity (LLFC) and provide empirical evidence of its consistent co-occurrence with LMC. We also show that LLFC applied to the last layer directly implies LMC.

\begin{definition}[\textbf{Layerwise Linear Feature Connectivity}]\label{def:LLFC}
    Given dataset $\mathcal D$ and two modes ${\boldsymbol{\theta}}_A$, ${\boldsymbol{\theta}}_B$ of an $L$-layer neural network $f$, the modes ${\boldsymbol{\theta}}_A$ and ${\boldsymbol{\theta}}_B$ are said to be layerwise linearly feature connected if they satisfy \begin{align}
        \forall \ell \in [L], \forall  \alpha \in [0,1],  \exists c > 0, \text{s.t. } \, c f^{(\ell)}( \alpha {\boldsymbol{\theta}}_A + (1-\alpha) {\boldsymbol{\theta}}_B ) = \alpha f^{(\ell)}({\boldsymbol{\theta}}_A) + (1-\alpha) f^{(\ell)}({\boldsymbol{\theta}}_B). \label{eq:def-LLFC}
    \end{align}
\end{definition}
LLFC states that the per-layer feature of the interpolated model $\boldsymbol\theta_{\alpha} = \alpha {\boldsymbol{\theta}}_A + (1-\alpha){\boldsymbol{\theta}}_B$ has the same direction as the linear interpolation of the features of ${\boldsymbol{\theta}}_A$ and ${\boldsymbol{\theta}}_B$. 
This means that the feature map $f^{(\ell)}$ behaves similarly to a linear map (up to a scaling factor) on the line segment between ${\boldsymbol{\theta}}_A$ and ${\boldsymbol{\theta}}_B$, even though it is a nonlinear map globally.

\begin{figure}[tb!]
    \centering
    \includegraphics[width=\textwidth]{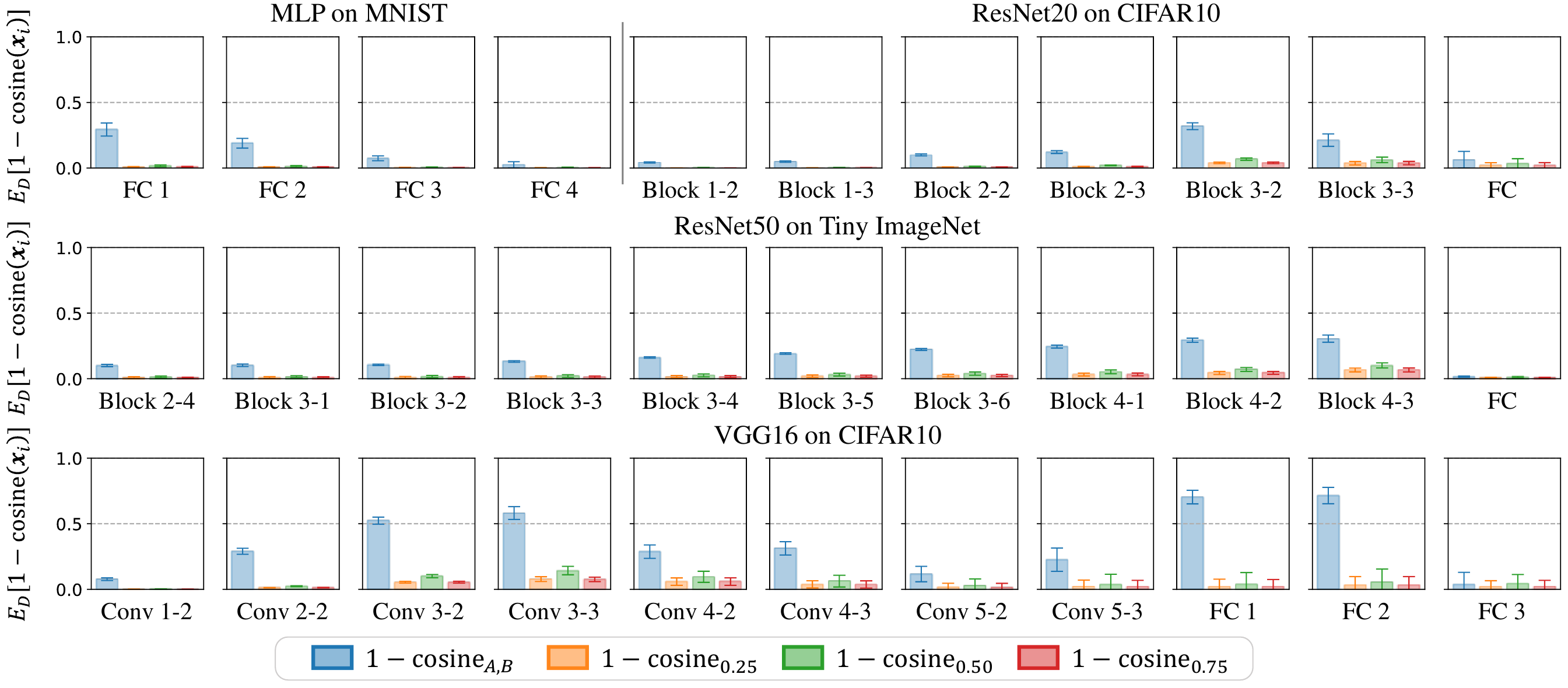}
    \caption{Comparison of $\mathbb{E}_{\mathcal{D}}[1-{\rm cosine}_{\alpha}(\boldsymbol{x}_i)]$ and $\mathbb{E}_{\mathcal{D}}[1-{\rm cosine}_{A, B}(\boldsymbol{x}_i)]$. The spawning method is used to obtain two linearly connected modes ${\boldsymbol{\theta}}_A$ and ${\boldsymbol{\theta}}_B$. Results are presented for different layers of various model architectures on different datasets, with $\alpha \in \{0.25, 0.5, 0.75\}$. Standard deviations across the dataset are reported by error bars. More results are in \cref{suppl:more-LLFC}.}
    \label{fig:LLFC-train}
\end{figure}
\begin{figure}[tb!]
    \centering
    \includegraphics[width=\textwidth]{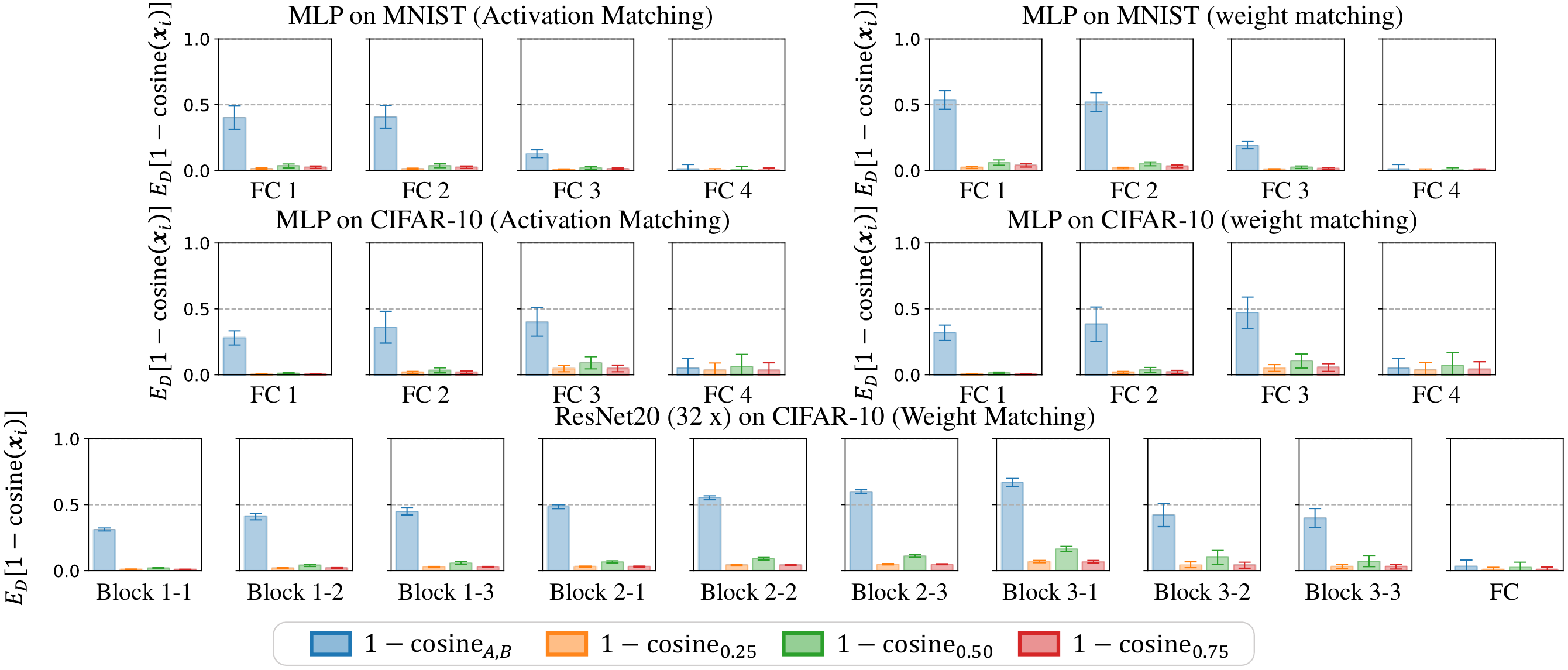}
    \caption{Comparison of $\mathbb{E}_{\mathcal{D}}[1-{\rm cosine}_{\alpha}(\boldsymbol{x}_i)]$ and $\mathbb{E}_{\mathcal{D}}[1-{\rm cosine}_{A, B}(\boldsymbol{x}_i)]$. The permutation method is used to obtain two linearly connected modes ${\boldsymbol{\theta}}_A$ and ${\boldsymbol{\theta}}_B$. Results are presented for different layers of various model architectures on different datasets, with $\alpha \in \{0.25, 0.5, 0.75\}$. Standard deviations across the dataset are reported by error bars. More results are in \cref{suppl:more-LLFC}.}
    \label{fig:LLFC-perm}
\end{figure}

\textbf{LLFC Co-occurs with LMC.}
We now verify that LLFC consistently co-occurs with LMC across different architectures and datasets. We use the spawning method and the permutation method described in \cref{sec:preliminaries} to obtain linearly connected modes ${\boldsymbol{\theta}}_A$ and ${\boldsymbol{\theta}}_B$.
On each data point $\boldsymbol{x}_i$ in the test set $\mathcal{D}$, we measure the cosine similarity between the feature of the interpolated model ${\boldsymbol{\theta}}_{\alpha}$ and linear interpolations of the features of ${\boldsymbol{\theta}}_A$ and ${\boldsymbol{\theta}}_B$ in each layer $\ell$, as expressed as ${\rm cosine}_{\alpha}(\boldsymbol{x}_i) = \cos[f^{(\ell)}(\alpha {\boldsymbol{\theta}}_{A} + (1-\alpha) {\boldsymbol{\theta}}_B; \boldsymbol{x}_i), \alpha f^{(\ell)}({\boldsymbol{\theta}}_A; \boldsymbol{x}_i) + (1-\alpha) f^{(\ell)}({\boldsymbol{\theta}}_B; \boldsymbol{x}_i)]$. We compare this to the baseline cosine similarity between the features of ${\boldsymbol{\theta}}_A$ and ${\boldsymbol{\theta}}_B$ in the corresponding layer, namely ${\rm cosine}_{A, B}(\boldsymbol{x}_i) = \cos[f^{(\ell)}({\boldsymbol{\theta}}_A; \boldsymbol{x}_i), f^{(\ell)}({\boldsymbol{\theta}}_B; \boldsymbol{x}_i)]$. The results for the spawning method and the permutation method are presented in \cref{fig:LLFC-train,fig:LLFC-perm}, respectively. They show that the values of $\mathbb{E}_{\mathcal{D}}[1-{\rm cosine}_{\alpha}(\boldsymbol{x}_i)]$ are close to $0$ across different layers, architectures, datasets, and different values of $\alpha$, which verifies the LLFC property. The presence of small error bars indicates consistent behavior for each data point. Moreover, the values of $\mathbb{E}_{\mathcal{D}}[1-{\rm cosine}_{A, B}(\boldsymbol{x}_i
)]$ are not close to $0$, which rules out the trivial case that $f^{(\ell)}({\boldsymbol{\theta}}_A)$ and $f^{(\ell)}({\boldsymbol{\theta}}_B)$ are already perfectly aligned. 

\begin{figure}[tb!]
    \centering
    \includegraphics[width=\textwidth]{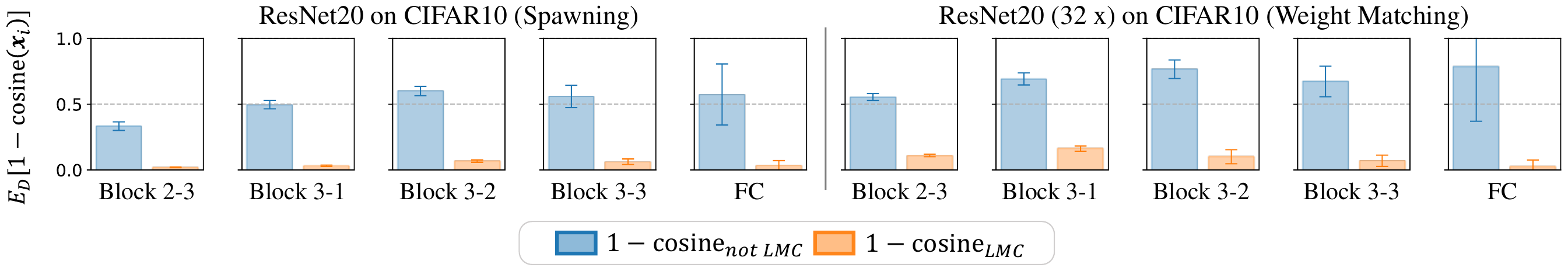}
    \caption{Comparison of $\mathbb{E}_{\mathcal{D}}[1-{\rm cosine}_{LMC}(\boldsymbol{x}_i)]$ and $\mathbb{E}_{\mathcal{D}}[1-{\rm cosine}_{not \ LMC}(\boldsymbol{x}_i)]$. Both the spawning and permutation methods are used to obtain two linearly connected modes. Standard deviations across the dataset are reported by error bars.}
    \label{fig:suppl-baseline-LLFC}
\end{figure}

To further verify, we also compare the values of $\mathbb{E}_{\mathcal{D}}[1-{\rm cosine}_{\alpha}(\boldsymbol{x}_i)]$ of two linearly connected modes with those of two modes that are independently trained (not satisfying LMC). We measure ${\rm cosine}_{0.5}$ of the features of two modes that are linearly connected and two modes that are independently trained, denoted as ${\rm cosine}_{LMC}$ and ${\rm cosine}_{not \ LMC}$ correspondingly. In \cref{fig:suppl-baseline-LLFC}, the values of $\mathbb{E}_{\mathcal{D}}[1-{\rm cosine}_{LMC}(\boldsymbol{x}_i)]$ are negligible compared to $\mathbb{E}_{\mathcal{D}}[1-{\rm cosine}_{not \ LMC}(\boldsymbol{x}_i)]$. The experimental results align with \cref{fig:LLFC-train,fig:LLFC-perm} and thus we firmly verify the claim that LLFC co-occurs with LMC.

\textbf{LLFC Implies LMC.} Intuitively, LLFC is a much stronger characterization than LMC since it establishes a linearity property in the high-dimensional feature map in every layer, rather than just for the final error. \cref{lem:LLFC-implies-LMC} below formally establishes that LMC is a consequence of LLFC by applying LLFC on the output layer.

\begin{lemma}[Proof in \cref{suppl:proof-of-LLFC-implies-LMC}]\label{lem:LLFC-implies-LMC}
    Suppose two modes ${\boldsymbol{\theta}}_A$, ${\boldsymbol{\theta}}_B$ satisfy LLFC on a dataset $\mathcal D$ and $ \max \{{\rm Err}_{\mathcal{D}}({\boldsymbol{\theta}}_A), {\rm Err}_{\mathcal{D}}({\boldsymbol{\theta}}_B)\} \leq \epsilon$, then we have
    \begin{align}
        \forall \alpha \in [0, 1], {\rm Err}_{\mathcal{D}}(\alpha {\boldsymbol{\theta}}_A + (1-\alpha) {\boldsymbol{\theta}}_B) \leq 2 \epsilon.
    \end{align}
\end{lemma}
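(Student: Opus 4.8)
The plan is to apply LLFC only at the output layer $\ell = L$ and then reduce the loss bound to a union bound over misclassified test points. First I would invoke \cref{def:LLFC} with $\ell = L$: for the given $\alpha$ there is a constant $c>0$ such that $c\,f^{(L)}(\alpha{\boldsymbol{\theta}}_A + (1-\alpha){\boldsymbol{\theta}}_B; {\boldsymbol{X}}) = \alpha f^{(L)}({\boldsymbol{\theta}}_A; {\boldsymbol{X}}) + (1-\alpha) f^{(L)}({\boldsymbol{\theta}}_B; {\boldsymbol{X}})$, and reading off the $i$-th column gives the same identity for each test point ${\boldsymbol{x}}_i$. Writing ${\boldsymbol{\theta}}_\alpha = \alpha{\boldsymbol{\theta}}_A + (1-\alpha){\boldsymbol{\theta}}_B$ and recalling that $f^{(L)} = f$ is the vector of class scores and that $\operatorname{Err}_{\mathcal D}$ is determined by its $\arg\max$, the key observation is that multiplying a score vector by the positive scalar $c$ does not change its $\arg\max$; hence the label predicted by ${\boldsymbol{\theta}}_\alpha$ on ${\boldsymbol{x}}_i$ equals $\arg\max_j \big[\alpha f({\boldsymbol{\theta}}_A; {\boldsymbol{x}}_i) + (1-\alpha) f({\boldsymbol{\theta}}_B; {\boldsymbol{x}}_i)\big]_j$.

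Next I would show that whenever both ${\boldsymbol{\theta}}_A$ and ${\boldsymbol{\theta}}_B$ classify ${\boldsymbol{x}}_i$ correctly, so does ${\boldsymbol{\theta}}_\alpha$. Indeed, if $\big[f({\boldsymbol{\theta}}_A;{\boldsymbol{x}}_i)\big]_{y_i} > \big[f({\boldsymbol{\theta}}_A;{\boldsymbol{x}}_i)\big]_j$ and $\big[f({\boldsymbol{\theta}}_B;{\boldsymbol{x}}_i)\big]_{y_i} > \big[f({\boldsymbol{\theta}}_B;{\boldsymbol{x}}_i)\big]_j$ for every $j\neq y_i$, then multiplying the first inequality by $\alpha\ge 0$, the second by $1-\alpha\ge 0$, and adding shows that coordinate $y_i$ is the unique maximizer of $\alpha f({\boldsymbol{\theta}}_A;{\boldsymbol{x}}_i) + (1-\alpha) f({\boldsymbol{\theta}}_B;{\boldsymbol{x}}_i)$ (the endpoints $\alpha\in\{0,1\}$ being immediate), which by the previous paragraph is exactly the prediction of ${\boldsymbol{\theta}}_\alpha$. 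Taking the contrapositive, every test point on which ${\boldsymbol{\theta}}_\alpha$ errs is a point on which ${\boldsymbol{\theta}}_A$ errs or ${\boldsymbol{\theta}}_B$ errs.

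Finally I would apply a union bound over the (finite) test set: the error set of ${\boldsymbol{\theta}}_\alpha$ is contained in the union of the error sets of ${\boldsymbol{\theta}}_A$ and ${\boldsymbol{\theta}}_B$, so $\operatorname{Err}_{\mathcal D}({\boldsymbol{\theta}}_\alpha) \le \operatorname{Err}_{\mathcal D}({\boldsymbol{\theta}}_A) + \operatorname{Err}_{\mathcal D}({\boldsymbol{\theta}}_B) \le 2\epsilon$, and since $\alpha\in[0,1]$ was arbitrary this is exactly the claim.

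I expect the only delicate point to be the bookkeeping around the positive scalar $c$: this is precisely what makes LLFC — which a priori pins down only the \emph{direction} of the output feature — strong enough to control an $\arg\max$-based error metric, and it is worth stating explicitly. A minor accompanying subtlety is the tie-breaking convention when two class scores coincide; under the usual convention that a tie at the true label counts as an error (equivalently, assuming the relevant maxima are attained uniquely on the inputs considered), the argument above goes through verbatim.
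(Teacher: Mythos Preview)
Your proof is correct and follows essentially the same approach as the paper's: apply LLFC at the output layer, observe that if both endpoint models classify a point correctly then so does their convex combination of outputs, and conclude by a union bound. You are in fact slightly more careful than the paper's own write-up, which silently drops the scaling factor $c$ at the last layer, whereas you explicitly note that $c>0$ leaves the $\arg\max$ unchanged.
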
 

In summary, we see that the LMC property, which was used to study the loss landscapes in the entire parameter space, extends to the internal features in almost all the layers. LLFC offers much richer structural properties than LMC. In \cref{sec:root-cause}, we will dig deeper into the contributing factors to LLFC and leverage the insights to gain new understanding of the spawning and the permutation methods.

\section{Why Does LLFC Emerge?} \label{sec:root-cause}

We have seen that LLFC is a prevalent phenomenon that co-occurs with LMC, and it establishes a broader notion of linear connectivity than LMC.
In this section, we investigate the root cause of LLFC, and identify two key conditions, \emph{weak additivity for ReLU activations} and \emph{commutativity}. We verify these conditions empirically and prove that they collectively imply LLFC.
From there, we provide an explanation for the effectiveness of the permutation method, offering new insights into LMC.

\subsection{Underlying Factors of LLFC} \label{sec:two-conditions}
For convenience, we consider a multi-layer perceptron (MLP) in \cref{sec:root-cause}, though the results can be easily adapted to any feed-forward structure, e.g., a convolutional neural network\footnote{We also conduct experiments on CNNs. For a Conv layer, the forward propagation will be denoted as $\boldsymbol{W}\boldsymbol{H}$ similar to a linear layer. Typically, the weight $\boldsymbol{W}$ for a Conv layer has shape $(\text{\# of output channels}, \text{\# of input channels}, \text{height}, \text{width})$ and we reshape $\boldsymbol{W}$ to a matrix with dimensions $(\text{\# of output channels}, \text{\# of input channels} \times \text{height} \times \text{width})$.} (CNN).
For an $L$-layer MLP $f$ with ReLU activation, the weight matrix in the $\ell$-th linear layer is denoted as $\boldsymbol{W}^{(\ell)} \in \mathbb{R}^{d_{\ell} \times d_{\ell-1}}$, and $\boldsymbol{b}^{(\ell)} \in \mathbb{R}^{d_{\ell}}$ is the bias in that layer. For a given input data matrix $\boldsymbol{X} \in \mathbb{R}^{d_0 \times n}$, denote the feature (post-activation) in the $\ell$-th layer as ${\boldsymbol{H}}^{(\ell)} = f^{(\ell)}({\boldsymbol{\theta}};{\boldsymbol{X}}) \in \mathbb{R}^{d_\ell \times n}$, and correspondingly pre-activation as $\tilde {\boldsymbol{H}}^{(\ell)} \in \mathbb{R}^{d_\ell \times n}$. The forward propagation in the $\ell$-th layer is: 
\begin{align*}
\boldsymbol{H}^{(\ell)} = \sigma(\tilde{\boldsymbol{H}}^{(\ell)}), \quad \tilde{\boldsymbol{H}}^{(\ell)} = \boldsymbol{W}^{(\ell)} \boldsymbol{H}^{(\ell-1)} + \boldsymbol{b}^{(\ell)} \boldsymbol{1}_{d_{\ell}}^{\top}.
\end{align*}
Here, $\sigma$ denotes the ReLU activation function, and $\boldsymbol{1}_{d_{\ell}} \in \mathbb{R}^{d_{\ell}}$ denotes the all-one vector. Additionally, we use ${\boldsymbol{h}}_i^{(\ell)}$ to denote the $i$-th row of ${\boldsymbol{H}}^{(\ell)}$, and $\tilde{\boldsymbol{h}}_i^{(\ell)}$ to denote the $i$-th row of $\tilde{\boldsymbol{H}}^{(\ell)}$, which correspond to the post- and pre-activations of the $i$-th input at layer $\ell$, respectively.

\textbf{Condition I: Weak Additivity for ReLU Activations.}\footnote{Weak additivity has no relation to stable neurons \cite{serra2021scaling}. Stable neuron is defined as one whose output is the constant value zero or the pre-activation output on all inputs, which is a property concerning a single network. On the other hand, weak additivity concerns a relation between two networks.}

\begin{definition}[\textbf{Weak Additivity for ReLU Activations}]\label{def:weak-add}
    Given a dataset $\mathcal{D}$, two modes $\boldsymbol\theta_A$ and $\boldsymbol\theta_B$ are said to satisfy weak additivity for ReLU activations if
    \begin{equation}
    \forall \ell \in [L], \forall \alpha \in [0, 1], \quad \sigma(\alpha \tilde{\boldsymbol{H}}_A^{(\ell)} + (1-\alpha)\tilde{\boldsymbol{H}}_B^{(\ell)}) = \alpha\sigma(\tilde{\boldsymbol{H}}_A^{(\ell)}) + (1-\alpha)\sigma(\tilde{\boldsymbol{H}}_B^{(\ell)}).
    \label{eq:def-weak-add}
    \end{equation}
\end{definition}

\cref{def:weak-add} requires the ReLU activation function to behave like a linear function for the pre-activations in each layer of the two networks.
Although this cannot be true in general since ReLU is a nonlinear function, we verify it empirically for modes that satisfy LMC and LLFC.

\begin{figure}[tb!]
    \centering
    \includegraphics[width=\textwidth]{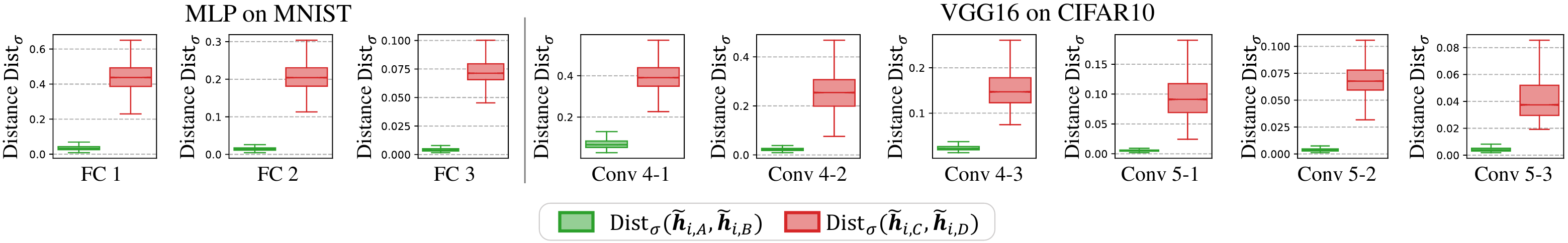}
    \caption{Comparison between the distribution of the normalized distance $\text{Dist}_{\sigma}(\tilde{\boldsymbol{h}}_{i, A}, \tilde{\boldsymbol{h}}_{i, B})$ and $\text{Dist}_{\sigma}(\tilde{\boldsymbol{h}}_{i, C}, \tilde{\boldsymbol{h}}_{i, D})$. Here, $\tilde{\boldsymbol{h}}_{i, A}$ and $\tilde{\boldsymbol{h}}_{i, B}$ are features of two linearly connected modes, i.e., $\boldsymbol{\theta}_A$ and $\boldsymbol{\theta}_B$ (founded by the spawning method). $\tilde{\boldsymbol{h}}_{i, C}$ and $\tilde{\boldsymbol{h}}_{i, D}$ comes from two modes that are independently trained. Results are presented for different layers of MLP on MNIST and VGG-16 on CIFAR-10.}
    \label{fig:weak-additivity_spawn}
\end{figure}

\begin{figure}[tb!]
    \centering
    \includegraphics[width=\textwidth]{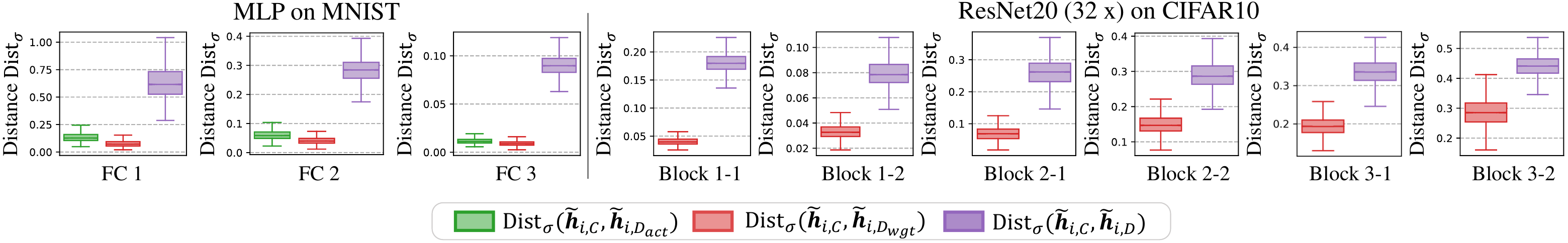}
    \caption{Comparison among the distribution of the normalized distance $\text{Dist}_{\sigma}(\tilde{\boldsymbol{h}}_{i, C}, \tilde{\boldsymbol{h}}_{i, D_{act}})$, $\text{Dist}_{\sigma}(\tilde{\boldsymbol{h}}_{i, C}, \tilde{\boldsymbol{h}}_{i, D_{wgt}})$ and $\text{Dist}_{\sigma}(\tilde{\boldsymbol{h}}_{i, C}, \tilde{\boldsymbol{h}}_{i, D})$. Here, $\tilde{\boldsymbol{h}}_{i, C}$ and $\tilde{\boldsymbol{h}}_{i, D}$ are features of two modes that are independently trained, i.e., $\boldsymbol{\theta}_C$ and $\boldsymbol{\theta}_D$. $\tilde{\boldsymbol{h}}_{i, D_{act}}$ and $\tilde{\boldsymbol{h}}_{i, D_{wgt}}$ comes from $\boldsymbol{\theta}_{D_{act}}$ (permuted $\boldsymbol{\theta}_D$ using the activation matching) and $\boldsymbol{\theta}_{D_{wgt}}$ (permuted $\boldsymbol{\theta}_D$ using the weight matching), correspondingly. For ResNet-20, the values of $\text{Dist}_{\sigma}$ are calculated in the first Conv layer of each block. Results are presented for different layers of MLP on MNIST and ResNet-20 on CIFAR-10.}
    \label{fig:weak-additivity_perm}
\end{figure}

We conduct experiments on various datasets and architectures to validate the weak additivity for ReLU activations.
Specifically, given two modes $\boldsymbol{\theta}_A$ and $\boldsymbol{\theta}_B$ and a data point $\boldsymbol{x}_i$ in the test set $\mathcal{D}$, we compute the normalized distances between the left-hand side and the right-hand side of \cref{eq:def-weak-add} for each layer $\ell$, varying the values of $\alpha$. 
We denote the maximum distance across the range of $\alpha$ as $\text{Dist}_{\sigma}(\tilde{\boldsymbol{h}}_{i, A}, \tilde{\boldsymbol{h}}_{i, B}) = \max_{\alpha \in [0, 1]}\text{dist}(\sigma\left(\alpha\tilde{\boldsymbol{h}}_{i, A} + (1-\alpha)\tilde{\boldsymbol{h}}_{i, B}\right), \alpha\sigma(\tilde{\boldsymbol{h}}_{i, A}) +(1-\alpha) \sigma(\tilde{\boldsymbol{h}}_{i, B}))$, where $\text{dist}(\boldsymbol{x}, \boldsymbol{y}) := \|\boldsymbol{x} - \boldsymbol{y}\|^2 / (\| \boldsymbol{x}\| \cdot \|\boldsymbol{y}\|)$. 
To validate the weak additivity condition, we compare the values of $\text{Dist}_{\sigma}$ of two modes that are linearly connected with those of two modes that are independently trained. 

Both spawning and permutation methods are shown to demonstrate the weak additivity condition.
For spawning method, we first obtain two linearly connected modes, i.e, $\boldsymbol{\theta}_A$ and $\boldsymbol{\theta}_B$, and then two independently trained modes, i.e, $\boldsymbol{\theta}_C$ and $\boldsymbol{\theta}_D$ (not satisfying LMC/LLFC).
We compare the values of $\text{Dist}_{\sigma}$ of $\boldsymbol{\theta}_A$ and $\boldsymbol{\theta}_B$ with those of $\boldsymbol{\theta}_C$ and $\boldsymbol{\theta}_D$.
In \cref{fig:weak-additivity_spawn}, we observe that across different datasets and model architectures, at different layers, $\text{Dist}_{\sigma}(\tilde{\boldsymbol{h}}_{i, A}, \tilde{\boldsymbol{h}}_{i, B})$ are negligible (and much smaller than the baseline $\text{Dist}_{\sigma}(\tilde{\boldsymbol{h}}_{i, C}, \tilde{\boldsymbol{h}}_{i, D})$).
For permutation methods, given two independently trained modes, i.e., $\boldsymbol{\theta}_C$ and $\boldsymbol{\theta}_D$, we permute the $\boldsymbol{\theta}_D$ such that the permuted $\pi(\boldsymbol{\theta}_D)$ are linearly connected with $\boldsymbol{\theta}_C$. Both activation matching and weight matching are used and the permuted $\pi(\boldsymbol{\theta}_D)$ are denoted as $\boldsymbol{\theta}_{D_{act}}$ and $\boldsymbol{\theta}_{D_{wgt}}$ respectively. In \cref{fig:weak-additivity_perm}, the values of $\text{Dist}_{\sigma}(\tilde{\boldsymbol{h}}_{i, C}, \tilde{\boldsymbol{h}}_{i, D_{act}})$ and $\text{Dist}_{\sigma}(\tilde{\boldsymbol{h}}_{i, C}, \tilde{\boldsymbol{h}}_{i, D_{wgt}})$ are close to zero compared to $\text{Dist}_{\sigma}(\tilde{\boldsymbol{h}}_{i, C}, \tilde{\boldsymbol{h}}_{i, D})$.
Therefore, we verify the weak additivity condition for both spawning and permutation methods.

\textbf{Condition II: Commutativity.}

\begin{definition}[\textbf{Commutativity}]\label{def:comm} 
    Given a dataset $\mathcal{D}$, two modes $\boldsymbol{\theta}_A$ and $\boldsymbol{\theta}_B$ are said to satisfy commutativity if
    \begin{align}
        \forall \ell \in [L],\, {\boldsymbol{W}}_A^{(\ell)} {\boldsymbol H}_A^{(\ell-1)} + {\boldsymbol W}_B^{(\ell)}{\boldsymbol H}_B^{(\ell-1)} = {\boldsymbol W}_A^{(\ell)} {\boldsymbol H}_B^{(\ell-1)} + {\boldsymbol W}_B^{(\ell)} {\boldsymbol H}_A^{(\ell-1)}. \label{eq:def-comm}
    \end{align}
\end{definition}

Commutativity depicts that the next-layer linear transformations applied to the internal features of two neural networks can be interchanged. This property is crucial for improving our understanding of LMC and LLFC. In \cref{sec:explain-perm}, we will use the commutativity property to provide new insights into the permutation method.

\begin{figure}[tb!]
    \centering
    \includegraphics[width=\textwidth]{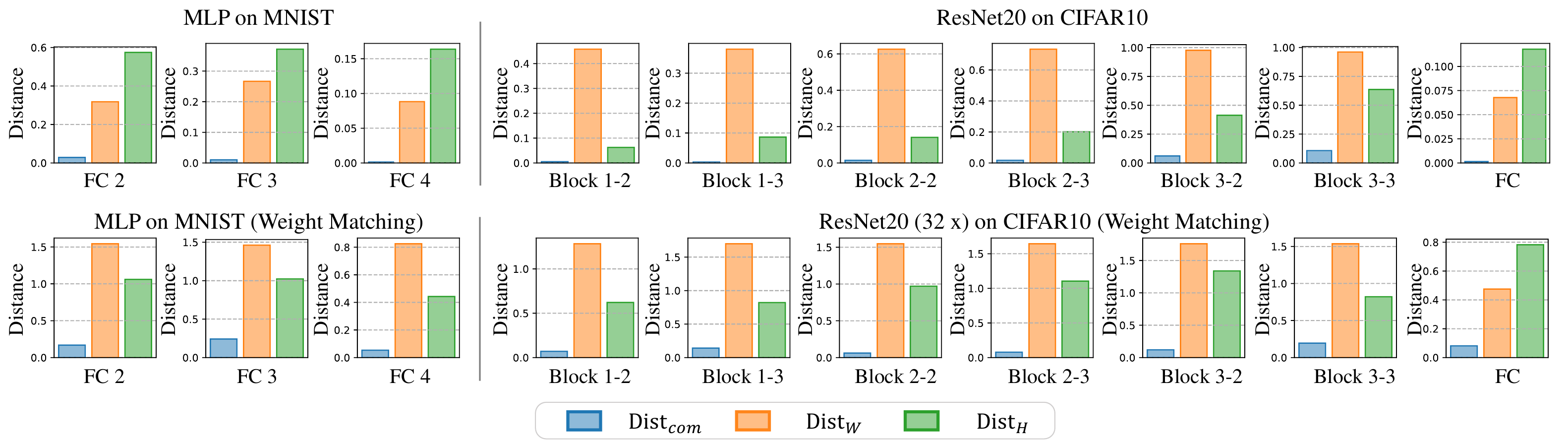}
    \caption{Comparison of $\text{Dist}_{com}$, $\text{Dist}_W$, and $\text{Dist}_H$. In the first row, the spawning method is used to acquire modes that satisfy LLFC, whereas the permutation method is used for the second row. For ResNet-20, $\text{Dist}_{com}$, $\text{Dist}_W$ are calculated in the first Conv layer of each block. The results are presented for different layers of MLP on the MNIST and ResNet-20 on the CIFAR-10. More results are in \cref{suppl:more-commutativity}.}
    \label{fig:commutativity}
\end{figure}

We conduct experiments on various datasets and model architectures to verify the commutativity property for modes that satisfy LLFC. Specifically, for a given dataset $\mathcal{D}$ and two modes $\boldsymbol{\theta}_A$ and $\boldsymbol{\theta}_B$ that satisfy LLFC, we compute the normalized distance between the left-hand side and the right-hand side of \cref{eq:def-comm}, denoted as $\text{Dist}_{com} = \text{dist}\left(\text{vec}({\boldsymbol{W}}_A^{(\ell)} {\boldsymbol H}_A^{(\ell-1)} + {\boldsymbol W}_B^{(\ell)}{\boldsymbol H}_B^{(\ell-1)}), \text{vec}({\boldsymbol W}_A^{(\ell)} {\boldsymbol H}_B^{(\ell-1)} + {\boldsymbol W}_B^{(\ell)} {\boldsymbol H}_A^{(\ell-1)})\right)$. Furthermore, we compare $\text{Dist}_{com}$ with the normalized distance between the weight matrices of the current layer $\ell$, denoted as $\text{Dist}_W$, and the normalized distances between the post-activations of the previous layer $\ell-1$, denoted as $\text{Dist}_H$. These distances are expressed as $\text{Dist}_W = \text{dist}\left(\text{vec}({\boldsymbol{W}}_A^{(\ell)}), \text{vec}({\boldsymbol{W}}_B^{(\ell)})\right)$ and $\text{Dist}_H = \text{dist}\left(\text{vec}({\boldsymbol{H}}_A^{(\ell-1)}), \text{vec}({\boldsymbol{H}}_B^{(\ell-1)})\right)$, respectively. \cref{fig:commutativity} shows that for both spawning and permutation methods, $\text{Dist}_{com}$ is negligible compared with $\text{Dist}_{W}$ and $\text{Dist}_{H}$, which confirms the commutativity condition. Note that we also rule out the trivial case where either the weight matrices or the post-activations in the two networks are already perfectly aligned, as weights and post-activations often differ significantly. We also add more baseline experiments for comparison (see \cref{suppl:more-commutativity} for more results).

Additionally, we note that commutativity has a similarity to model stitching~\citep{lenc2015understanding,bansal2021revisiting}. In \cref{suppl:more-model-stitching}, we show that a stronger form of model stitching (without an additional trainable layer) works for two networks that satisfy LLFC.

\textbf{Conditions I and II Imply LLFC.}~ \cref{thm:LLFC-by-weak-and-comm} below shows that weak additivity for ReLU activations (\cref{def:weak-add}) and commutativity (\cref{def:comm}) imply LLFC.

\begin{theorem}[Proof in \cref{suppl:thm1}]\label{thm:LLFC-by-weak-and-comm}
    Given a dataset $\mathcal{D}$, if two modes ${\boldsymbol{\theta}}_A$ and ${\boldsymbol{\theta}}_B$ satisfy weak additivity for ReLU activations (\cref{def:weak-add}) and commutativity (\cref{def:comm}), then
    \begin{align*}
        \forall \alpha \in [0, 1], \forall \ell \in [L],\, f^{(\ell)}\left( \alpha {\boldsymbol{\theta}}_A + (1-\alpha) {\boldsymbol{\theta}}_B\right) =  \alpha f^{(\ell)}({\boldsymbol{\theta}}_A) + (1-\alpha) f^{(\ell)}({\boldsymbol{\theta}}_B).
    \end{align*}
\end{theorem}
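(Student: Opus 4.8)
The plan is to prove the identity by induction on the layer index $\ell$, carrying along the slightly stronger inductive hypothesis that \emph{both} the pre-activations and the post-activations of the interpolated mode are the corresponding convex combinations of those of $\boldsymbol{\theta}_A$ and $\boldsymbol{\theta}_B$. Write $\boldsymbol{\theta}_\alpha = \alpha\boldsymbol{\theta}_A + (1-\alpha)\boldsymbol{\theta}_B$, so that its parameters are $\boldsymbol{W}_\alpha^{(\ell)} = \alpha\boldsymbol{W}_A^{(\ell)} + (1-\alpha)\boldsymbol{W}_B^{(\ell)}$ and $\boldsymbol{b}_\alpha^{(\ell)} = \alpha\boldsymbol{b}_A^{(\ell)} + (1-\alpha)\boldsymbol{b}_B^{(\ell)}$, and denote its layer features by $\boldsymbol{H}_\alpha^{(\ell)} = f^{(\ell)}(\boldsymbol{\theta}_\alpha)$ and $\tilde{\boldsymbol{H}}_\alpha^{(\ell)}$. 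The base case $\ell = 0$ is immediate: $f^{(0)}(\boldsymbol{\theta};\boldsymbol{X}) = \boldsymbol{X}$ for every $\boldsymbol{\theta}$, hence $\boldsymbol{H}_\alpha^{(0)} = \boldsymbol{X} = \alpha\boldsymbol{H}_A^{(0)} + (1-\alpha)\boldsymbol{H}_B^{(0)}$.

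For the inductive step, assume $\boldsymbol{H}_\alpha^{(\ell-1)} = \alpha\boldsymbol{H}_A^{(\ell-1)} + (1-\alpha)\boldsymbol{H}_B^{(\ell-1)}$. Substitute this together with $\boldsymbol{W}_\alpha^{(\ell)}$ and $\boldsymbol{b}_\alpha^{(\ell)}$ into the forward rule $\tilde{\boldsymbol{H}}_\alpha^{(\ell)} = \boldsymbol{W}_\alpha^{(\ell)}\boldsymbol{H}_\alpha^{(\ell-1)} + \boldsymbol{b}_\alpha^{(\ell)}\boldsymbol{1}_{d_\ell}^\top$ and expand the product. This yields the diagonal terms $\alpha^2\boldsymbol{W}_A^{(\ell)}\boldsymbol{H}_A^{(\ell-1)}$ and $(1-\alpha)^2\boldsymbol{W}_B^{(\ell)}\boldsymbol{H}_B^{(\ell-1)}$, the two ``mismatched'' cross terms $\alpha(1-\alpha)\boldsymbol{W}_A^{(\ell)}\boldsymbol{H}_B^{(\ell-1)}$ and $\alpha(1-\alpha)\boldsymbol{W}_B^{(\ell)}\boldsymbol{H}_A^{(\ell-1)}$, and the bias contribution $\alpha\boldsymbol{b}_A^{(\ell)}\boldsymbol{1}_{d_\ell}^\top + (1-\alpha)\boldsymbol{b}_B^{(\ell)}\boldsymbol{1}_{d_\ell}^\top$, which already combines linearly. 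Now commutativity (\cref{def:comm}) enters: it replaces $\boldsymbol{W}_A^{(\ell)}\boldsymbol{H}_B^{(\ell-1)} + \boldsymbol{W}_B^{(\ell)}\boldsymbol{H}_A^{(\ell-1)}$ by the matched sum $\boldsymbol{W}_A^{(\ell)}\boldsymbol{H}_A^{(\ell-1)} + \boldsymbol{W}_B^{(\ell)}\boldsymbol{H}_B^{(\ell-1)}$, after which the coefficients collapse via $\alpha^2 + \alpha(1-\alpha) = \alpha$ and $(1-\alpha)^2 + \alpha(1-\alpha) = 1-\alpha$. This gives $\tilde{\boldsymbol{H}}_\alpha^{(\ell)} = \alpha\,\tilde{\boldsymbol{H}}_A^{(\ell)} + (1-\alpha)\,\tilde{\boldsymbol{H}}_B^{(\ell)}$.

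Finally, apply the activation: $\boldsymbol{H}_\alpha^{(\ell)} = \sigma(\tilde{\boldsymbol{H}}_\alpha^{(\ell)}) = \sigma(\alpha\,\tilde{\boldsymbol{H}}_A^{(\ell)} + (1-\alpha)\,\tilde{\boldsymbol{H}}_B^{(\ell)})$, and weak additivity for ReLU activations (\cref{def:weak-add}) turns the right-hand side into $\alpha\,\sigma(\tilde{\boldsymbol{H}}_A^{(\ell)}) + (1-\alpha)\,\sigma(\tilde{\boldsymbol{H}}_B^{(\ell)}) = \alpha\boldsymbol{H}_A^{(\ell)} + (1-\alpha)\boldsymbol{H}_B^{(\ell)}$, which closes the induction; for the output layer $\ell = L$, where $\sigma$ is the identity, this step is trivial. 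Since $f^{(\ell)}(\boldsymbol{\theta}) = \boldsymbol{H}^{(\ell)}$, this is exactly the asserted equality, and it realizes \cref{def:LLFC} with constant $c = 1$.

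I expect the only genuine subtlety to be the bookkeeping of the cross terms: the interpolated network's forward map is bilinear in (weights, incoming features), so a priori the pre-activation is quadratic in $\alpha$, and it is precisely commutativity that annihilates the quadratic part and linearizes the map — everything else (the base case, the linear handling of the bias, and the activation step) is routine. The one point worth checking is that commutativity is invoked at the right index, namely that $\boldsymbol{W}^{(\ell)}$ is applied to the layer-$(\ell-1)$ features of $A$ and $B$, which is exactly the shape of the cross terms produced by the expansion.
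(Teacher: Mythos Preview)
Your proof is correct and follows essentially the same approach as the paper's: both expand the bilinear product $\boldsymbol{W}_\alpha^{(\ell)}\boldsymbol{H}_\alpha^{(\ell-1)}$, invoke commutativity to swap the cross terms $\boldsymbol{W}_A^{(\ell)}\boldsymbol{H}_B^{(\ell-1)}+\boldsymbol{W}_B^{(\ell)}\boldsymbol{H}_A^{(\ell-1)}$ for the matched ones so the coefficients collapse to $\alpha$ and $1-\alpha$, and then apply weak additivity to pass the convex combination through $\sigma$. The paper phrases the layerwise step as a one-step identity $\alpha\boldsymbol{H}_A^{(\ell)}+(1-\alpha)\boldsymbol{H}_B^{(\ell)} = g^{(\ell)}(\boldsymbol{\theta}_\alpha;\,\alpha\boldsymbol{H}_A^{(\ell-1)}+(1-\alpha)\boldsymbol{H}_B^{(\ell-1)})$ and then recursively unfolds down to $\boldsymbol{X}$, which is just your induction written in the other direction.
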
 

Note that the definition of LLFC (\cref{def:LLFC}) allows a scaling factor $c$, while \cref{thm:LLFC-by-weak-and-comm} establishes a stronger version of LLFC where $c=1$. We attribute this inconsistency to the accumulation of errors\footnote{we find that employing a scaling factor $c$ enables a much better description of the practical behavior than other choices (e.g. an additive error term).} in weak additivity and commutativity conditions, since they are only approximated satisfied in practice. Yet in most cases, we observe that $c$ is close to $1$ (see \cref{suppl:more-LLFC} for more results).

\subsection{Justification of the Permutation Methods}\label{sec:explain-perm}
In this subsection, we provide a justification of the permutation methods in Git Re-Basin~\citep{ainsworth2023git}: weight matching~\eqref{eq:weight-match} and activation matching~\eqref{eq:act-match}.
Recall from \cref{sec:preliminaries} that given two modes $\boldsymbol{\theta}_A$ and $\boldsymbol{\theta}_B$, the permutation method aims to find a permutation $\pi = \{\boldsymbol{P}^{(\ell)}\}_{\ell = 1}^{L-1}$ such that the permuted $\boldsymbol{\theta}_B'=\pi(\boldsymbol{\theta}_B)$ and $\boldsymbol{\theta}_A$ are linearly connected, where $\boldsymbol{P}^{(\ell)}$ is a permutation matrix applied to the $\ell$-th layer feature.
Concretely, with a permutation $\pi$, we can formulate ${{\boldsymbol{W}}_B'}^{(\ell)}$ and ${{\boldsymbol{H}}_B'}^{(\ell)}$ of $\boldsymbol{\theta}_B'$ in each layer $\ell$ as ${{\boldsymbol{W}}_B'}^{(\ell)} = {\boldsymbol{P}}^{(\ell)} {\boldsymbol{W}}_B^{(\ell)} {{\boldsymbol{P}}^{(\ell-1)}}^{\top}$ and ${{\boldsymbol{H}}_B'}^{(\ell)} = {\boldsymbol{P}}^{(\ell)} {\boldsymbol{H}}_B^{(\ell)}$ \citep{ainsworth2023git}.\footnote{Note that both $\boldsymbol{P}^{(0)}$ and $\boldsymbol{P}^{(L)}$ are identity matrices.} 
In \cref{sec:two-conditions}, we have identified the commutativity property as a key factor contributing to LLFC. The commutativity property~\eqref{eq:def-comm} between $\boldsymbol{\theta}_A$ and $\boldsymbol{\theta}_B'$ can be written as \begin{align*}
    \left({\boldsymbol{W}}_A^{(\ell)} - {{\boldsymbol{W}}_B'}^{(\ell)}\right) \left({\boldsymbol{H}}_A^{(\ell-1)} - {{\boldsymbol{H}}_B'}^{(\ell-1)}\right) = {\boldsymbol{0}},
\end{align*}
or
\begin{align}
    \left({\boldsymbol{W}}_A^{(\ell)} - {\boldsymbol{P}}^{(\ell)} {\boldsymbol{W}}_B^{(\ell)} {{\boldsymbol{P}}^{(\ell-1)}}^{\top}\right)
    \left({\boldsymbol{H}}_A^{(\ell-1)} - \boldsymbol P^{(\ell-1)} {\boldsymbol{H}}_B^{(\ell-1)}\right) = {\boldsymbol{0}} \label{eq:comm-perm}.
\end{align}
We note that the weight matching~\eqref{eq:weight-match} and activation matching~\eqref{eq:act-match} objectives can be written as $\min_{\pi}\sum_{\ell=1}^{L} \left\|{\boldsymbol{W}}_A^{(\ell)} - {\boldsymbol{P}}^{(\ell)} {\boldsymbol{W}}_B^{(\ell)} {{\boldsymbol{P}}^{(\ell-1)}}^{\top}\right\|_F^2$ and $\min_\pi \sum_{\ell=1}^{L-1} \left\|\boldsymbol H_A^{(\ell)} - \boldsymbol{P}^{(\ell)} \boldsymbol{H}_B^{(\ell)} \right\|_F^2$, respectively, which directly correspond to the two factors in \cref{eq:comm-perm}.
Therefore, we can interpret Git Re-Basin as a means to ensure commutativity.
Extended discussion on Git Re-basin \citep{ainsworth2023git} can be found in \cref{suppl:git-re-basin}.

\section{Conclusion and Discussion}
We identified Layerwise Linear Feature Connectivity (LLFC) as a prevalent phenomenon that co-occurs with Linear Mode Connectivity (LMC). By investigating the underlying contributing factors to LLFC, we obtained novel insights into the existing permutation methods that give rise to LMC. The consistent co-occurrence of LMC and LLFC suggests that LLFC may play an important role if we want to understand LMC in full. Since the LLFC phenomenon suggests that averaging weights is roughly equivalent to averaging features, a natural future direction is to study feature averaging methods and investigate whether averaging leads to better features.

We note that our current experiments mainly focus on image classification tasks, though aligning with existing literature on LMC. We leave the exploration of empirical evidence beyond image classification as future direction. 
We also note that our \cref{thm:LLFC-by-weak-and-comm} predicts LLFC in an ideal case, while in practice, a scaling factor $c$ is introduced to \cref{def:LLFC} to better describe the experimental results. Realistic theorems and definitions (approximated version) are defered to future research.

Finally, we leave the question if it is possible to find a permutation directly enforcing the commutativity property~\eqref{eq:comm-perm}. Minimizing $\| ({\boldsymbol{W}}_A^{(\ell)} - {\boldsymbol{P}}^{(\ell)} {\boldsymbol{W}}_B^{(\ell)} {{\boldsymbol{P}}^{(\ell-1)}}^{\top}) ({\boldsymbol{H}}_A^{(\ell-1)} - \boldsymbol P^{(\ell-1)} {\boldsymbol{H}}_B^{(\ell-1)}) \|_F$ entails solving Quadratic Assignment Problems (QAPs) (See \cref{suppl:qap} for derivation), which are known NP-hard. Solving QAPs calls for efficient techniques especially seeing the progress in learning to solve QAPs~\cite{NEURIPS2018_51d92be1, WangTPAMI22}.

{
\small
\bibliography{main}
\bibliographystyle{plainnat}
}
\clearpage

\newpage
\appendix

\setcounter{definition}{0}
\setcounter{lemma}{0}
\setcounter{theorem}{0}

\section{Proofs of Theorems and Derivations}
\subsection{Proof of \cref{lem:suppl-LLFC-implies-LMC}}\label{suppl:proof-of-LLFC-implies-LMC}

In this section, we prove \cref{lem:suppl-LLFC-implies-LMC} in the main paper. This lemma indicates that we can directly imply Linear Mode Connectivity (LMC, see \cref{def:suppl-LMC}) from Layerwise Linear Feature Connectivity (LLFC, see \cref{def:suppl-LLFC}) applied to last layer.

\begin{definition}[\textbf{Linear Mode Connectivity}]\label{def:suppl-LMC}
    Given a test dataset $\mathcal D$ and two modes ${\boldsymbol{\theta}}_A$ and ${\boldsymbol{\theta}}_B$ such that $\operatorname{Err}_{\mathcal D}\left({\boldsymbol{\theta}}_A\right) \approx \operatorname{Err}_{\mathcal D}\left({\boldsymbol{\theta}}_B\right)$, we say ${\boldsymbol{\theta}}_A$ and ${\boldsymbol{\theta}}_B$ are linearly connected if they satisfy \begin{align*}
         \operatorname{Err}_{\mathcal D}\left(\alpha {\boldsymbol{\theta}}_A + (1 - \alpha) {\boldsymbol{\theta}}_B\right) \approx \operatorname{Err}_{\mathcal D}\left({\boldsymbol{\theta}}_A\right), \qquad \forall \alpha\in[0,1].
    \end{align*}
\end{definition}

\begin{definition}[\textbf{Layerwise Linear Feature Connectivity}]\label{def:suppl-LLFC}
    Given dataset $\mathcal D$ and two modes ${\boldsymbol{\theta}}_A$, ${\boldsymbol{\theta}}_B$ of an $L$-layer neural network $f$, the modes ${\boldsymbol{\theta}}_A$ and ${\boldsymbol{\theta}}_B$ are said to be layerwise linearly feature connected if they satisfy \begin{align*}
        \forall \ell \in [L], \forall  \alpha \in [0,1],  \exists c > 0, \text{s.t. } \, c f^{(\ell)}\left(\alpha {\boldsymbol{\theta}}_A + \left(1-\alpha\right) {\boldsymbol{\theta}}_B \right) = \alpha f^{(\ell)}\left({\boldsymbol{\theta}}_A\right) + \left(1-\alpha\right) f^{(\ell)}\left({\boldsymbol{\theta}}_B\right).
    \end{align*}
\end{definition}

\begin{lemma}\label{lem:suppl-LLFC-implies-LMC}
    Suppose two modes ${\boldsymbol{\theta}}_A$, ${\boldsymbol{\theta}}_B$ satisfy LLFC on a dataset $\mathcal D$ and \begin{align*}
    \max \left\{{\rm Err}_{\mathcal{D}}\left({\boldsymbol{\theta}}_A\right), {\rm Err}_{\mathcal{D}}\left({\boldsymbol{\theta}}_B\right)\right\} \leq \epsilon,
    \end{align*}then we have\begin{align*}
        \forall \alpha \in [0, 1], {\rm Err}_{\mathcal{D}}\left(\alpha {\boldsymbol{\theta}}_A + (1-\alpha) {\boldsymbol{\theta}}_B\right) \leq 2 \epsilon.
    \end{align*}
\end{lemma}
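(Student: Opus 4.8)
The plan is to exploit the fact that LLFC, applied to the output layer $\ell = L$, tells us that $f^{(L)}(\boldsymbol\theta_\alpha)$ is a positive scalar multiple of $\alpha f^{(L)}(\boldsymbol\theta_A) + (1-\alpha) f^{(L)}(\boldsymbol\theta_B)$, and that multiplying the logit vector of a single input by a positive constant does not change which coordinate is the argmax, hence does not change the predicted label. So the predicted label of the interpolated network on any input $\boldsymbol x_i$ is the argmax of $\alpha f^{(L)}(\boldsymbol\theta_A; \boldsymbol x_i) + (1-\alpha) f^{(L)}(\boldsymbol\theta_B; \boldsymbol x_i)$.

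First I would fix an input $\boldsymbol x_i$ and set $\boldsymbol u = f^{(L)}(\boldsymbol\theta_A; \boldsymbol x_i)$, $\boldsymbol v = f^{(L)}(\boldsymbol\theta_B; \boldsymbol x_i)$. By \cref{def:suppl-LLFC} with $\ell = L$ there is $c > 0$ with $f^{(L)}(\boldsymbol\theta_\alpha; \boldsymbol x_i) = \tfrac1c(\alpha \boldsymbol u + (1-\alpha)\boldsymbol v)$, so $\arg\max$ of the interpolated logits equals $\arg\max(\alpha \boldsymbol u + (1-\alpha)\boldsymbol v)$. Next I would argue that whenever both $\boldsymbol\theta_A$ and $\boldsymbol\theta_B$ classify $\boldsymbol x_i$ correctly — i.e. $\arg\max \boldsymbol u = \arg\max \boldsymbol v = y_i$ — the convex combination $\alpha\boldsymbol u + (1-\alpha)\boldsymbol v$ also has its max at coordinate $y_i$: for any other coordinate $j$, $(\alpha\boldsymbol u + (1-\alpha)\boldsymbol v)_{y_i} - (\alpha\boldsymbol u + (1-\alpha)\boldsymbol v)_j = \alpha(u_{y_i} - u_j) + (1-\alpha)(v_{y_i} - v_j) \ge 0$, and with the usual tie-breaking convention (consistent across $\boldsymbol\theta_A$, $\boldsymbol\theta_B$, and $\boldsymbol\theta_\alpha$) the interpolated network predicts $y_i$ too. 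Hence the set of inputs misclassified by $\boldsymbol\theta_\alpha$ is contained in the union of the sets misclassified by $\boldsymbol\theta_A$ and by $\boldsymbol\theta_B$.

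Finally I would finish with a union bound on error rates: $\operatorname{Err}_{\mathcal D}(\boldsymbol\theta_\alpha) \le \operatorname{Err}_{\mathcal D}(\boldsymbol\theta_A) + \operatorname{Err}_{\mathcal D}(\boldsymbol\theta_B) \le \epsilon + \epsilon = 2\epsilon$, which holds for every $\alpha \in [0,1]$.

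The only mild subtlety — and the thing I would be most careful about — is the tie-breaking in the argmax: the argument that correct-classification is preserved under convex combination needs a deterministic tie-break rule that is applied identically to all three networks, and one should note that strict correctness of both endpoints makes ties at the top coordinate of $\alpha\boldsymbol u+(1-\alpha)\boldsymbol v$ harmless (coordinate $y_i$ still attains the max). There is no real "hard part" here; the lemma is essentially a one-line consequence of scale-invariance of argmax plus a union bound, and the main work is just stating the classification-rule bookkeeping cleanly.
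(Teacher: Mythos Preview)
Your proposal is correct and follows essentially the same approach as the paper's proof: apply LLFC at the output layer, observe that the argmax of a convex combination agrees with $y_i$ whenever both endpoints do, and finish with a union bound. If anything, you are slightly more careful than the paper --- you explicitly carry the positive scaling constant $c$ from \cref{def:suppl-LLFC} and note the scale-invariance of argmax, whereas the paper's proof silently drops $c$ when writing $f(\alpha\boldsymbol\theta_A+(1-\alpha)\boldsymbol\theta_B)=\alpha f(\boldsymbol\theta_A)+(1-\alpha)f(\boldsymbol\theta_B)$.
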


\begin{proof}
    Note that the classification depends on the relative order of the entries in the output of the final layer. As a consequence, for each data point in the dataset $\mathcal{D}$, the linear interpolation of the outputs of the models makes the correct classification if both models make the correct classification. Therefore, only if one of the model makes the incorrect classification, the linear interpolation of the outputs of the models would possibly make the incorrect classification, i.e, \begin{align*}
        {\rm Err}_{\mathcal{D}}\left(\alpha f(\boldsymbol{\theta}_A) + \left(1-\alpha\right) f(\boldsymbol{\theta}_B)\right) \leq {\rm Err}_{\mathcal{D}} \left(\boldsymbol{\theta}_A\right) + {\rm Err}_{\mathcal{D}} \left(\boldsymbol{\theta}_B\right).
    \end{align*}
    
    Since $\boldsymbol{\theta}_A$ and $\boldsymbol{\theta}_B$ satisfy LLFC, then at last layer we have \begin{align*}
        f\left(\alpha \boldsymbol{\theta}_A + \left(1-\alpha\right) \boldsymbol{\theta}_B\right) = \alpha f\left(\boldsymbol{\theta}_A\right) + \left(1-\alpha\right) f\left(\boldsymbol{\theta}_B\right),
    \end{align*} then have\begin{align*}
        {\rm Err}_{\mathcal{D}}\left(\alpha {\boldsymbol{\theta}}_A + \left(1-\alpha\right) {\boldsymbol{\theta}}_B\right) \leq {\rm Err}_{\mathcal{D}} \left(\boldsymbol{\theta}_A\right) + {\rm Err}_{\mathcal{D}} \left(\boldsymbol{\theta}_B\right).
    \end{align*}

    According to the condition that\begin{align*}
        \max \left\{{\rm Err}_{\mathcal{D}}\left({\boldsymbol{\theta}}_A\right), {\rm Err}_{\mathcal{D}}\left({\boldsymbol{\theta}}_B\right)\right\} \leq \epsilon,
    \end{align*} which indicates \begin{align*}
        {\rm Err}_{\mathcal{D}}\left(\alpha {\boldsymbol{\theta}}_A + \left(1-\alpha\right) {\boldsymbol{\theta}}_B\right) \leq 2\epsilon,
    \end{align*}    
    and this finishes the proof.
\end{proof}

\subsection{Proof of \cref{thm:suppl-LLFC-by-weak-and-comm}} \label{suppl:thm1}

In this section, we prove \cref{thm:suppl-LLFC-by-weak-and-comm} in the main paper. \cref{thm:suppl-LLFC-by-weak-and-comm} indicates that we can derive LLFC from two simple conditions: weak additivity for ReLU activations (\cref{def:suppl-weak-additivity}) and commutativity (\cref{def:suppl-commutativity}). Note that though we consider a multi-layer perceptron (MLP) for convenience, our proof and results can be easily adopted to any feed-forward structure, e.g., a convolutional neural network (CNN). 

\begin{definition}[\textbf{Weak Additivity for ReLU Activations}]\label{def:suppl-weak-additivity}
    Given a dataset $\mathcal{D}$, two modes $\boldsymbol\theta_A$ and $\boldsymbol\theta_B$ are said to satisfy weak additivity for ReLU activations if
    \begin{align*}
    \forall \ell \in [L],\quad \sigma\left(\tilde {\boldsymbol{H}}_A^{(\ell)} + \tilde {\boldsymbol{H}}_B^{(\ell)}\right) = \sigma\left(\tilde {\boldsymbol{H}}_A^{(\ell)}\right) + \sigma\left(\tilde {\boldsymbol{H}}_B^{(\ell)}\right).
    \end{align*}
\end{definition}

\begin{definition}[\textbf{Commutativity}]\label{def:suppl-commutativity}
    Given a dataset $\mathcal{D}$, two modes $\boldsymbol{\theta}_A$ and $\boldsymbol{\theta}_B$ are said to satisfy commutativity if\begin{align*}
        \forall \ell \in [L],\, {\boldsymbol{W}}_A^{(\ell)} {\boldsymbol H}_A^{(\ell-1)} + {\boldsymbol W}_B^{(\ell)}{\boldsymbol H}_B^{(\ell-1)} = {\boldsymbol W}_A^{(\ell)} {\boldsymbol H}_B^{(\ell-1)} + {\boldsymbol W}_B^{(\ell)} {\boldsymbol H}_A^{(\ell-1)}.
    \end{align*}
\end{definition}

\begin{theorem}\label{thm:suppl-LLFC-by-weak-and-comm}
    Given a dataset $\mathcal{D}$, if two modes ${\boldsymbol{\theta}}_A$ and ${\boldsymbol{\theta}}_B$ satisfy weak additivity for ReLU activations (\cref{def:suppl-weak-additivity}) and commutativity (\cref{def:suppl-commutativity}), then
    \begin{align*}
        \forall \alpha \in [0, 1], \forall \ell \in [L],\, f^{(\ell)}\left( \alpha {\boldsymbol{\theta}}_A + \left(1-\alpha\right) {\boldsymbol{\theta}}_B\right) =  \alpha f^{(\ell)}\left({\boldsymbol{\theta}}_A\right) + \left(1-\alpha\right) f^{(\ell)}\left({\boldsymbol{\theta}}_B\right).
    \end{align*}
\end{theorem}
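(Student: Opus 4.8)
The plan is to proceed by induction on the layer index $\ell$, tracking the two conditions through one forward step at a time. The base case is $\ell = 0$, where $f^{(0)}(\boldsymbol{\theta}; \boldsymbol{X}) = \boldsymbol{X}$ for every choice of weights, so trivially $f^{(0)}(\alpha \boldsymbol{\theta}_A + (1-\alpha)\boldsymbol{\theta}_B) = \boldsymbol{X} = \alpha \boldsymbol{X} + (1-\alpha)\boldsymbol{X}$. It is cleanest to restate the weak-additivity hypothesis in its $\alpha$-parametrized form (as in \cref{def:weak-add} in the main text) rather than the bare additive form of \cref{def:suppl-weak-additivity}; I will assume the version that gives $\sigma(\alpha \tilde{\boldsymbol{H}}_A^{(\ell)} + (1-\alpha)\tilde{\boldsymbol{H}}_B^{(\ell)}) = \alpha \sigma(\tilde{\boldsymbol{H}}_A^{(\ell)}) + (1-\alpha)\sigma(\tilde{\boldsymbol{H}}_B^{(\ell)})$, since this is what the inductive step consumes directly.

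For the inductive step, suppose $f^{(\ell-1)}(\boldsymbol{\theta}_\alpha) = \alpha \boldsymbol{H}_A^{(\ell-1)} + (1-\alpha)\boldsymbol{H}_B^{(\ell-1)}$, where $\boldsymbol{\theta}_\alpha = \alpha \boldsymbol{\theta}_A + (1-\alpha)\boldsymbol{\theta}_B$. The weight matrix and bias of $\boldsymbol{\theta}_\alpha$ in layer $\ell$ are $\alpha \boldsymbol{W}_A^{(\ell)} + (1-\alpha)\boldsymbol{W}_B^{(\ell)}$ and $\alpha \boldsymbol{b}_A^{(\ell)} + (1-\alpha)\boldsymbol{b}_B^{(\ell)}$ by definition of the interpolation. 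So the pre-activation of the interpolated network is
\begin{align*}
\tilde{\boldsymbol{H}}_\alpha^{(\ell)} &= \left(\alpha \boldsymbol{W}_A^{(\ell)} + (1-\alpha)\boldsymbol{W}_B^{(\ell)}\right)\left(\alpha \boldsymbol{H}_A^{(\ell-1)} + (1-\alpha)\boldsymbol{H}_B^{(\ell-1)}\right) + \left(\alpha \boldsymbol{b}_A^{(\ell)} + (1-\alpha)\boldsymbol{b}_B^{(\ell)}\right)\boldsymbol{1}^\top.
\end{align*}
Expanding the product gives four terms: $\alpha^2 \boldsymbol{W}_A^{(\ell)}\boldsymbol{H}_A^{(\ell-1)}$, $(1-\alpha)^2 \boldsymbol{W}_B^{(\ell)}\boldsymbol{H}_B^{(\ell-1)}$, and the two cross terms $\alpha(1-\alpha)(\boldsymbol{W}_A^{(\ell)}\boldsymbol{H}_B^{(\ell-1)} + \boldsymbol{W}_B^{(\ell)}\boldsymbol{H}_A^{(\ell-1)})$. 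The commutativity condition (\cref{def:comm}) lets me rewrite the cross-term sum as $\boldsymbol{W}_A^{(\ell)}\boldsymbol{H}_A^{(\ell-1)} + \boldsymbol{W}_B^{(\ell)}\boldsymbol{H}_B^{(\ell-1)}$; substituting and collecting coefficients, $\alpha^2 + \alpha(1-\alpha) = \alpha$ and $(1-\alpha)^2 + \alpha(1-\alpha) = 1-\alpha$, so the weight contribution collapses to $\alpha \boldsymbol{W}_A^{(\ell)}\boldsymbol{H}_A^{(\ell-1)} + (1-\alpha)\boldsymbol{W}_B^{(\ell)}\boldsymbol{H}_B^{(\ell-1)}$. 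Adding the (already linear) bias term gives $\tilde{\boldsymbol{H}}_\alpha^{(\ell)} = \alpha \tilde{\boldsymbol{H}}_A^{(\ell)} + (1-\alpha)\tilde{\boldsymbol{H}}_B^{(\ell)}$. Finally, apply $\sigma$ to both sides and invoke weak additivity to get $\boldsymbol{H}_\alpha^{(\ell)} = \sigma(\tilde{\boldsymbol{H}}_\alpha^{(\ell)}) = \alpha \sigma(\tilde{\boldsymbol{H}}_A^{(\ell)}) + (1-\alpha)\sigma(\tilde{\boldsymbol{H}}_B^{(\ell)}) = \alpha \boldsymbol{H}_A^{(\ell)} + (1-\alpha)\boldsymbol{H}_B^{(\ell)}$, which closes the induction. (For the final layer, if there is no activation, one simply stops after the pre-activation computation.)

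There is no deep obstacle here — the argument is essentially a bookkeeping exercise once the two conditions are in hand — but the one place to be careful is the algebraic identity: commutativity is exactly what is needed to absorb the quadratic cross terms into linear ones, and it is worth spelling out that $\alpha(1-\alpha) \cdot (\text{cross sum}) = \alpha(1-\alpha)(\boldsymbol{W}_A^{(\ell)}\boldsymbol{H}_A^{(\ell-1)} + \boldsymbol{W}_B^{(\ell)}\boldsymbol{H}_B^{(\ell-1)})$ and then recombine with the square terms. A secondary subtlety is reconciling the two stated forms of weak additivity: the appendix states only the $\alpha = 1/2$-flavored additive identity $\sigma(\tilde{\boldsymbol{H}}_A + \tilde{\boldsymbol{H}}_B) = \sigma(\tilde{\boldsymbol{H}}_A) + \sigma(\tilde{\boldsymbol{H}}_B)$, and to run the induction for general $\alpha \in [0,1]$ one needs the scaled version; one should either adopt \cref{def:weak-add} directly or note that positive homogeneity of ReLU, $\sigma(\lambda \boldsymbol{z}) = \lambda \sigma(\boldsymbol{z})$ for $\lambda \ge 0$, together with additivity applied to $\alpha \tilde{\boldsymbol{H}}_A$ and $(1-\alpha)\tilde{\boldsymbol{H}}_B$, recovers it. Similarly, commutativity should implicitly be taken to hold with $\boldsymbol{H}^{(\ell-1)}$ being the features of the \emph{interpolated-path} computation, but since the inductive hypothesis identifies those features with the affine combination $\alpha \boldsymbol{H}_A^{(\ell-1)} + (1-\alpha)\boldsymbol{H}_B^{(\ell-1)}$, the condition as stated (in terms of $\boldsymbol{H}_A^{(\ell-1)}$ and $\boldsymbol{H}_B^{(\ell-1)}$) is exactly what is required.
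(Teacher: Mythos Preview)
Your proposal is correct and follows essentially the same approach as the paper's proof: both arguments use commutativity to collapse the bilinear expansion $(\alpha W_A + (1-\alpha)W_B)(\alpha H_A + (1-\alpha)H_B)$ into the linear combination $\alpha W_A H_A + (1-\alpha)W_B H_B$, then invoke weak additivity to pass through ReLU, and iterate layer by layer (the paper writes this as a recursive unfolding, you as an induction on $\ell$ --- these are the same thing). Your remark about reconciling the two stated forms of weak additivity is apt; the paper silently uses the $\alpha$-parametrized version from \cref{def:weak-add} in its proof as well.
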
 

\begin{proof}
    Before delving into the proof, let us denote the forward propagation in each layer $\ell$ by \begin{align*}
        &\tilde{g}^{(\ell)}\left(\boldsymbol{\theta}; \boldsymbol{H}^{(\ell-1)}\right) = \boldsymbol{W}^{(\ell)}\boldsymbol{H}^{(\ell-1)} + \boldsymbol{b}^{(\ell)} \boldsymbol{1}_{d_{\ell}}^{\top} \\
        &g^{(\ell)}\left(\boldsymbol{\theta}; \boldsymbol{H}^{(\ell-1)}\right) = \sigma\left(\tilde{g}^{(\ell)}\left(\boldsymbol{\theta}; \boldsymbol{H}^{(\ell-1)}\right)\right) = \boldsymbol{H}^{(\ell)}
    \end{align*}

    Given $\boldsymbol{\theta}_A$ and $\boldsymbol{\theta}_B$ that satisfy the commutativity property, then $\forall \ell \in [L]$ and $\forall \alpha \in [0, 1]$, we have\begin{align*}
        \boldsymbol{W}^{(\ell)}_A \boldsymbol{H}^{(\ell-1)}_A + \boldsymbol{W}^{(\ell)}_B \boldsymbol{H}^{(\ell-1)}_B =& \boldsymbol{W}^{(\ell)}_A \boldsymbol{H}^{(\ell-1)}_B + \boldsymbol{W}^{(\ell)}_B \boldsymbol{H}^{(\ell-1)}_A \\
        \tilde{g}^{(\ell)}\left(\boldsymbol{\theta}_A; \boldsymbol{H}_A^{(\ell-1)}\right) + \tilde{g}^{(\ell)}\left(\boldsymbol{\theta}_B; \boldsymbol{H}_B^{(\ell-1)}\right) =& \tilde{g}^{(\ell)}\left(\boldsymbol{\theta}_A; \boldsymbol{H}_B^{(\ell-1)}\right) + \tilde{g}^{(\ell)}\left(\boldsymbol{\theta}_B; \boldsymbol{H}_A^{(\ell-1)}\right)\\
        \alpha\left(1-\alpha\right)\left(\tilde{g}^{(\ell)}\left(\boldsymbol{\theta}_A; \boldsymbol{H}_A^{(\ell-1)}\right) + \tilde{g}^{(\ell)}\left(\boldsymbol{\theta}_B; \boldsymbol{H}_B^{(\ell-1)}\right)\right) =& \alpha\left(1-\alpha\right)\left(\tilde{g}^{(\ell)}\left(\boldsymbol{\theta}_A; \boldsymbol{H}_B^{(\ell-1)}\right) + \tilde{g}^{(\ell)}\left(\boldsymbol{\theta}_B; \boldsymbol{H}_A^{(\ell-1)}\right)\right) \\
        \alpha \tilde{g}^{(\ell)}\left(\boldsymbol{\theta}_A; \boldsymbol{H}_A^{(\ell-1)}\right) + (1-\alpha) \tilde{g}^{(\ell)}\left(\boldsymbol{\theta}_B; \boldsymbol{H}_B^{(\ell-1)}\right) =& \alpha^2 \tilde{g}^{(\ell)}\left(\boldsymbol{\theta}_A; \boldsymbol{H}_A^{(\ell-1)}\right) + (1-\alpha)^2 \tilde{g}^{(\ell)}\left(\boldsymbol{\theta}_B; \boldsymbol{H}_B^{(\ell-1)}\right)\\&+ \alpha\left(1-\alpha\right)\left(\tilde{g}^{(\ell)}\left(\boldsymbol{\theta}_A; \boldsymbol{H}_B^{(\ell-1)}\right) + \tilde{g}^{(\ell)}\left(\boldsymbol{\theta}_B; \boldsymbol{H}_A^{(\ell-1)}\right)\right)
    \end{align*}
    
    Additionally, we can easily verify that \begin{align*}
        &\tilde{g}^{(\ell)}\left(\alpha\boldsymbol{\theta}_A + \left(1-\alpha\right)\boldsymbol{\theta}_B; \boldsymbol{H}^{(\ell)}\right) = \alpha \tilde{g}^{(\ell)}\left(\boldsymbol{\theta}_A; \boldsymbol{H}^{(\ell)}\right) + \left(1-\alpha\right)\tilde{g}^{(\ell)}\left(\boldsymbol{\theta}_B; \boldsymbol{H}^{(\ell)}\right)\\
        &\tilde{g}^{(\ell)}\left(\boldsymbol{\theta}; \alpha\boldsymbol{H}_A^{(\ell)} + \left(1-\alpha\right)\boldsymbol{H}_B^{(\ell)}\right) = \alpha \tilde{g}^{(\ell)}\left(\boldsymbol{\theta}; \boldsymbol{H}_A^{(\ell)}\right) + \left(1-\alpha\right)\tilde{g}^{(\ell)}\left(\boldsymbol{\theta}; \boldsymbol{H}_B^{(\ell)}\right)
    \end{align*}
    
    Subsequently, \begin{align*}
        \alpha \tilde{g}^{(\ell)}\left(\boldsymbol{\theta}_A; \boldsymbol{H}_A^{(\ell-1)}\right) + \left(1-\alpha\right) \tilde{g}^{(\ell)}\left(\boldsymbol{\theta}_B; \boldsymbol{H}_B^{(\ell-1)}\right) 
        =& \alpha \tilde{g}^{(\ell)}\left(\alpha\boldsymbol{\theta}_A + \left(1-\alpha\right)\boldsymbol{\theta}_B; \boldsymbol{H}_A^{(\ell-1)}\right) \\
        &+ \left(1-\alpha\right)\tilde{g}^{(\ell)}\left(\alpha\boldsymbol{\theta}_A + \left(1-\alpha\right)\boldsymbol{\theta}_B; \boldsymbol{H}_B^{(\ell-1)}\right) \\
        =& \tilde{g}^{(\ell)}\left(\alpha\boldsymbol{\theta}_A + \left(1-\alpha\right)\boldsymbol{\theta}_B; \alpha\boldsymbol{H}_A^{(\ell-1)} + \left(1-\alpha\right)\boldsymbol{H}_B^{(\ell-1)}\right).
    \end{align*} 
    
    Given the weak additivity for ReLU activation is satisfied for $\boldsymbol{\theta}_A$ and $\boldsymbol{\theta}_B$, then we have \begin{align*}
        \sigma\left(\alpha \tilde{g}^{(\ell)}\left(\boldsymbol{\theta}_A; \boldsymbol{H}_A^{(\ell-1)}\right) + \left(1-\alpha\right) \tilde{g}^{(\ell)}\left(\boldsymbol{\theta}_B; \boldsymbol{H}_B^{(\ell-1)}\right)\right) &= \sigma\left(\tilde{g}^{(\ell)}\left(\alpha\boldsymbol{\theta}_A + \left(1-\alpha\right)\boldsymbol{\theta}_B; \alpha\boldsymbol{H}_A^{(\ell-1)} + \left(1-\alpha\right)\boldsymbol{H}_B^{(\ell-1)}\right)\right) \\
        \alpha g^{(\ell)}\left(\boldsymbol{\theta}_A; \boldsymbol{H}_A^{(\ell-1)}\right) + \left(1-\alpha\right)g^{(\ell)}\left(\boldsymbol{\theta}_B; \boldsymbol{H}_B^{(\ell-1)}\right) &= g^{(\ell)}\left(\alpha\boldsymbol{\theta}_A + \left(1-\alpha\right)\boldsymbol{\theta}_B; \alpha\boldsymbol{H}_A^{(\ell-1)} + \left(1-\alpha\right)\boldsymbol{H}_B^{(\ell-1)}\right)
    \end{align*}

    To conclude, $\forall \ell \in [L]$ and $\forall \alpha \in [0, 1]$, we have \begin{align}
        \alpha \boldsymbol{H}_A^{(\ell)} + \left(1-\alpha\right) \boldsymbol{H}_B^{(\ell)} = g^{(\ell)}\left(\alpha\boldsymbol{\theta}_A + \left(1-\alpha\right)\boldsymbol{\theta}_B; \alpha\boldsymbol{H}_A^{(\ell-1)} + \left(1-\alpha\right)\boldsymbol{H}_B^{(\ell-1)}\right) \label{eq:thm-proof-recur}
    \end{align} 

    For the right hand side of \cref{eq:thm-proof-recur}, recursively, we can have \begin{align*}
        &g^{(\ell)}\left(\alpha\boldsymbol{\theta}_A + \left(1-\alpha\right)\boldsymbol{\theta}_B; \alpha\boldsymbol{H}_A^{(\ell-1)} + (1-\alpha)\boldsymbol{H}_B^{(\ell-1)}\right) \\
        =&g^{(\ell)}\left(\alpha\boldsymbol{\theta}_A + \left(1-\alpha\right)\boldsymbol{\theta}_B; g^{(\ell-1)}\left(\alpha\boldsymbol{\theta}_A + \left(1-\alpha\right)\boldsymbol{\theta}_B; \alpha\boldsymbol{H}_A^{(\ell-2)} + \left(1-\alpha\right)\boldsymbol{H}_B^{(\ell-2)}\right)\right) \\
        =& \left(g^{(\ell)} \circ g^{(\ell - 1)} \right)\left(\alpha\boldsymbol{\theta}_A + \left(1-\alpha\right)\boldsymbol{\theta}_B; \alpha\boldsymbol{H}_A^{(\ell-2)} + \left(1-\alpha\right)\boldsymbol{H}_B^{(\ell-2)}\right) \\
        =& \cdots \\
        =& \left(g^{(\ell)} \circ g^{(\ell - 1)} \cdots \circ g^{(1)} \right)\left(\alpha\boldsymbol{\theta}_A + \left(1-\alpha\right)\boldsymbol{\theta}_B; \boldsymbol{X} \right) \\
        =& f^{(\ell)}\left(\alpha\boldsymbol{\theta}_A + \left(1-\alpha\right)\boldsymbol{\theta}_B; \boldsymbol{X} \right),
    \end{align*} where $\boldsymbol{X}$ denotes the input data matrix. 

    Recall we denote $\boldsymbol{H}^{(\ell)} = f^{(\ell)}\left(\boldsymbol{\theta}; \boldsymbol{X}\right)$ which indicates \begin{align*}
        \alpha f^{(\ell)}\left(\boldsymbol{\theta}_A; \boldsymbol{X}\right) + \left(1-\alpha\right) f^{(\ell)}\left(\boldsymbol{\theta}_B; \boldsymbol{X}\right) = f^{(\ell)}\left(\alpha\boldsymbol{\theta}_A + \left(1-\alpha\right)\boldsymbol{\theta}_B; \boldsymbol{X} \right),
    \end{align*} and this finishes the proof.
\end{proof}

\subsection{Derivation of Quadratic Assignment Problem} \label{suppl:qap}

In this section, we aim to show that minimizing $\sum_{\ell=1}^L \left\| \left({\boldsymbol{W}}_A^{(\ell)} - {\boldsymbol{P}}^{(\ell)} {\boldsymbol{W}}_B^{(\ell)} {{\boldsymbol{P}}^{(\ell-1)}}^{\top}\right) \left({\boldsymbol{H}}_A^{(\ell-1)} - \boldsymbol P^{(\ell-1)} {\boldsymbol{H}}_B^{(\ell-1)}\right) \right\|_F^2$ includes solving Quadratic Assignment Problems (QAPs), known to be NP-hard.
\begin{align*}
    &\underset{\pi = \{\boldsymbol{P}^{(\ell)}\}}{\arg\min} \sum_{\ell=1}^{L}\left\| \left({\boldsymbol{W}}_A^{(\ell)} - {\boldsymbol{P}}^{(\ell)} {\boldsymbol{W}}_B^{(\ell)} {{\boldsymbol{P}}^{(\ell-1)}}^{\top}\right) \left({\boldsymbol{H}}_A^{(\ell-1)} - \boldsymbol P^{(\ell-1)} {\boldsymbol{H}}_B^{(\ell-1)}\right) \right\|_F^2 \\
    \iff&\underset{\pi = \{\boldsymbol{P}^{(\ell)}\}}{\arg\min} \sum_{\ell=1}^{L}\left\| {\boldsymbol{W}}_A^{(\ell)}{\boldsymbol{H}}_A^{(\ell-1)} - {\boldsymbol{P}}^{(\ell)} {\boldsymbol{W}}_B^{(\ell)} {{\boldsymbol{P}}^{(\ell-1)}}^{\top}{\boldsymbol{H}}_A^{(\ell-1)} - {\boldsymbol{W}}_A^{(\ell)} \boldsymbol P^{(\ell-1)} {\boldsymbol{H}}_B^{(\ell-1)} + {\boldsymbol{P}}^{(\ell)} {\boldsymbol{W}}_B^{(\ell)}{\boldsymbol{H}}_B^{(\ell-1)} \right\|_F^2 \\
    \iff&\underset{\pi = \{\boldsymbol{P}^{(\ell)}\}}{\arg\min} \sum_{\ell=1}^{L} \left(\left\| {\boldsymbol{W}}_A^{(\ell)}{\boldsymbol{H}}_A^{(\ell-1)} - {\boldsymbol{W}}_A^{(\ell)} \boldsymbol P^{(\ell-1)} {\boldsymbol{H}}_B^{(\ell-1)} \right\|_F^2 + \left\| {\boldsymbol{P}}^{(\ell)} {\boldsymbol{W}}_B^{(\ell)} {{\boldsymbol{P}}^{(\ell-1)}}^{\top}{\boldsymbol{H}}_A^{(\ell-1)} - {\boldsymbol{P}}^{(\ell)} {\boldsymbol{W}}_B^{(\ell)}{\boldsymbol{H}}_B^{(\ell-1)} \right\|_F^2\right. \\
    & + \left. \left\langle {\boldsymbol{W}}_A^{(\ell)}{\boldsymbol{H}}_A^{(\ell-1)} - {\boldsymbol{W}}_A^{(\ell)} \boldsymbol P^{(\ell-1)} {\boldsymbol{H}}_B^{(\ell-1)} ,{\boldsymbol{P}}^{(\ell)} {\boldsymbol{W}}_B^{(\ell)} {{\boldsymbol{P}}^{(\ell-1)}}^{\top}{\boldsymbol{H}}_A^{(\ell-1)} - {\boldsymbol{P}}^{(\ell)} {\boldsymbol{W}}_B^{(\ell)}{\boldsymbol{H}}_B^{(\ell-1)} \right\rangle_F \right).
\end{align*}

Consider its first term, i.e., \begin{align*}
    &\underset{\pi = \{\boldsymbol{P}^{(\ell)}\}}{\arg\min} \sum_{\ell=1}^{L} \left\| {\boldsymbol{W}}_A^{(\ell)}{\boldsymbol{H}}_A^{(\ell-1)} - {\boldsymbol{W}}_A^{(\ell)} \boldsymbol P^{(\ell-1)} {\boldsymbol{H}}_B^{(\ell-1)} \right\|_F^2\\
    \iff &\underset{\pi = \{\boldsymbol{P}^{(\ell)}\}}{\arg\min} \sum_{\ell=1}^{L} {\rm \ tr}\left(\left({\boldsymbol{H}_A^{(\ell -1)}}^{\top} {\boldsymbol{W}_A^{(\ell)}}^{\top} - {\boldsymbol{H}_B^{(\ell -1)}}^{\top} {\boldsymbol{P}^{(\ell-1)}}^{\top} {{\boldsymbol{W}}_A^{(\ell)}}^{\top}\right) \left({\boldsymbol{W}}_A^{(\ell)}{\boldsymbol{H}}_A^{(\ell-1)} - {\boldsymbol{W}}_A^{(\ell)} \boldsymbol P^{(\ell-1)} {\boldsymbol{H}}_B^{(\ell-1)}\right) \right)\\
    \iff &\underset{\pi = \{\boldsymbol{P}^{(\ell)}\}}{\arg\min} \sum_{\ell=1}^{L} {\rm \ tr} \left({\boldsymbol{H}_A^{(\ell -1)}}^{\top} {\boldsymbol{W}_A^{(\ell)}}^{\top}{\boldsymbol{W}}_A^{(\ell)}{\boldsymbol{H}}_A^{(\ell-1)} - {\boldsymbol{H}_B^{(\ell -1)}}^{\top} {\boldsymbol{P}^{(\ell-1)}}^{\top} {{\boldsymbol{W}}_A^{(\ell)}}^{\top} {\boldsymbol{W}}_A^{(\ell)}{\boldsymbol{H}}_A^{(\ell-1)}\right. \\
    & {\rm \ \ \ \ \ \ \ \ \ \ \ \ \ \ \ \ \ \ \ \ \ \ \ }- \left. {\boldsymbol{H}_A^{(\ell -1)}}^{\top} {\boldsymbol{W}_A^{(\ell)}}^{\top}{\boldsymbol{W}}_A^{(\ell)} \boldsymbol P^{(\ell-1)} {\boldsymbol{H}}_B^{(\ell-1)} + {\boldsymbol{H}_B^{(\ell -1)}}^{\top} {\boldsymbol{P}^{(\ell-1)}}^{\top} {{\boldsymbol{W}}_A^{(\ell)}}^{\top} {\boldsymbol{W}}_A^{(\ell)} \boldsymbol P^{(\ell-1)} {\boldsymbol{H}}_B^{(\ell-1)}\right) \\
    \iff &\underset{\pi = \{\boldsymbol{P}^{(\ell)}\}}{\arg\min} \sum_{\ell=1}^{L} {\rm \ tr} \left(-2{\boldsymbol{H}_B^{(\ell -1)}}^{\top}{\boldsymbol{P}^{(\ell-1)}}^{\top} {{\boldsymbol{W}}_A^{(\ell)}}^{\top} {\boldsymbol{W}}_A^{(\ell)}{\boldsymbol{H}}_A^{(\ell-1)} +  {\boldsymbol{H}_B^{(\ell -1)}}^{\top}{\boldsymbol{P}^{(\ell-1)}}^{\top} {{\boldsymbol{W}}_A^{(\ell)}}^{\top} {\boldsymbol{W}}_A^{(\ell)} \boldsymbol P^{(\ell-1)} {\boldsymbol{H}}_B^{(\ell-1)}\right) \\
    \iff &\underset{\pi = \{\boldsymbol{P}^{(\ell)}\}}{\arg\min} \sum_{\ell=1}^{L} {\rm \ tr} \left(-2{\boldsymbol{P}^{(\ell-1)}}^{\top} {{\boldsymbol{W}}_A^{(\ell)}}^{\top} {\boldsymbol{W}}_A^{(\ell)}{\boldsymbol{H}}_A^{(\ell-1)}{\boldsymbol{H}_B^{(\ell -1)}}^{\top} +  {\boldsymbol{P}^{(\ell-1)}}^{\top} {{\boldsymbol{W}}_A^{(\ell)}}^{\top} {\boldsymbol{W}}_A^{(\ell)} \boldsymbol P^{(\ell-1)} {\boldsymbol{H}}_B^{(\ell-1)}{\boldsymbol{H}_B^{(\ell -1)}}^{\top}\right) \\
    \iff &\underset{\pi = \{\boldsymbol{P}^{(\ell)}\}}{\arg\min} \sum_{\ell=1}^{L} \left( {\rm \ tr}\left({\boldsymbol{P}^{(\ell-1)}}^{\top} {{\boldsymbol{W}}_A^{(\ell)}}^{\top} {\boldsymbol{W}}_A^{(\ell)} \boldsymbol P^{(\ell-1)} {\boldsymbol{H}}_B^{(\ell-1)}{\boldsymbol{H}_B^{(\ell -1)}}^{\top}\right)-2{\rm \ tr} \left({\boldsymbol{P}^{(\ell-1)}}^{\top} {{\boldsymbol{W}}_A^{(\ell)}}^{\top} {\boldsymbol{W}}_A^{(\ell)}{\boldsymbol{H}}_A^{(\ell-1)}{\boldsymbol{H}_B^{(\ell -1)}}^{\top}\right)\right),
\end{align*} where ${\rm tr}\left({\boldsymbol{P}^{(\ell-1)}}^{\top} {{\boldsymbol{W}}_A^{(\ell)}}^{\top} {\boldsymbol{W}}_A^{(\ell)} \boldsymbol P^{(\ell-1)} {\boldsymbol{H}}_B^{(\ell-1)}{\boldsymbol{H}_B^{(\ell -1)}}^{\top}\right)-2{\rm tr} \left({\boldsymbol{P}^{(\ell-1)}}^{\top} {{\boldsymbol{W}}_A^{(\ell)}}^{\top} {\boldsymbol{W}}_A^{(\ell)}{\boldsymbol{H}}_A^{(\ell-1)}{\boldsymbol{H}_B^{(\ell -1)}}^{\top}\right)$ is in the form of Koopmans-Beckmann's QAP \cite{koopmans1957assignment} for each $P^{(\ell-1)}$ and known as NP-hard. Thus, solving $\underset{\pi = \{\boldsymbol{P}^{(\ell)}\}}{\arg\min} \sum_{\ell=1}^{L} \left\| {\boldsymbol{W}}_A^{(\ell)}{\boldsymbol{H}}_A^{(\ell-1)} - {\boldsymbol{W}}_A^{(\ell)} \boldsymbol P^{(\ell-1)} {\boldsymbol{H}}_B^{(\ell-1)} \right\|_F^2$ is to solve $L-1$ QAPs in parallel. 

Similarly, consider the second term, i.e, \begin{align*}
    &\underset{\pi = \{\boldsymbol{P}^{(\ell)}\}}{\arg\min}\sum_{\ell=1}^{L} \left\| {\boldsymbol{P}}^{(\ell)} {\boldsymbol{W}}_B^{(\ell)} {{\boldsymbol{P}}^{(\ell-1)}}^{\top}{\boldsymbol{H}}_A^{(\ell-1)} - {\boldsymbol{P}}^{(\ell)} {\boldsymbol{W}}_B^{(\ell)}{\boldsymbol{H}}_B^{(\ell-1)} \right\|_F^2 \\
    \iff&\underset{\pi = \{\boldsymbol{P}^{(\ell)}\}}{\arg\min}\sum_{\ell=1}^{L} \left\| {\boldsymbol{W}}_B^{(\ell)} {{\boldsymbol{P}}^{(\ell-1)}}^{\top}{\boldsymbol{H}}_A^{(\ell-1)} - {\boldsymbol{W}}_B^{(\ell)}{\boldsymbol{H}}_B^{(\ell-1)} \right\|_F^2 \\
    \iff&\underset{\pi = \{\boldsymbol{P}^{(\ell)}\}}{\arg\min}\sum_{\ell=1}^{L} {\rm \ tr}\left(\left( {{\boldsymbol{H}}_A^{(\ell-1)}}^{\top}{{\boldsymbol{P}}^{(\ell-1)}}{{\boldsymbol{W}}_B^{(\ell)}}^{\top} - {{\boldsymbol{H}}_B^{(\ell-1)}}^{\top}{{\boldsymbol{W}}_B^{(\ell)}}^{\top}\right) \left({\boldsymbol{W}}_B^{(\ell)} {{\boldsymbol{P}}^{(\ell-1)}}^{\top}{\boldsymbol{H}}_A^{(\ell-1)} - {\boldsymbol{W}}_B^{(\ell)}{\boldsymbol{H}}_B^{(\ell-1)}\right) \right)\\
    \iff&\underset{\pi = \{\boldsymbol{P}^{(\ell)}\}}{\arg\min}\sum_{\ell=1}^{L}{\rm \ tr}\left(
    {{\boldsymbol{H}}_A^{(\ell-1)}}^{\top}{{\boldsymbol{P}}^{(\ell-1)}}{{\boldsymbol{W}}_B^{(\ell)}}^{\top}{\boldsymbol{W}}_B^{(\ell)} {{\boldsymbol{P}}^{(\ell-1)}}^{\top}{\boldsymbol{H}}_A^{(\ell-1)} - {{\boldsymbol{H}}_A^{(\ell-1)}}^{\top}{{\boldsymbol{P}}^{(\ell-1)}}{{\boldsymbol{W}}_B^{(\ell)}}^{\top}{\boldsymbol{W}}_B^{(\ell)} {\boldsymbol{H}}_B^{(\ell-1)} \right. \\
    &{\rm \ \ \ \ \ \ \ \ \ \ \ \ \ \ \ \ \ \ \ \ \ \ \ }- \left. {{\boldsymbol{H}}_B^{(\ell-1)}}^{\top}{{\boldsymbol{W}}_B^{(\ell)}}^{\top} {\boldsymbol{W}}_B^{(\ell)} {{\boldsymbol{P}}^{(\ell-1)}}^{\top}{\boldsymbol{H}}_A^{(\ell-1)} + {{\boldsymbol{H}}_B^{(\ell-1)}}^{\top}{{\boldsymbol{W}}_B^{(\ell)}}^{\top}{\boldsymbol{W}}_B^{(\ell)}{\boldsymbol{H}}_B^{(\ell-1)}\right) \\
    \iff&\underset{\pi = \{\boldsymbol{P}^{(\ell)}\}}{\arg\min}\sum_{\ell=1}^{L}{\rm \ tr} \left({{\boldsymbol{H}}_A^{(\ell-1)}}^{\top}{{\boldsymbol{P}}^{(\ell-1)}}{{\boldsymbol{W}}_B^{(\ell)}}^{\top}{\boldsymbol{W}}_B^{(\ell)} {{\boldsymbol{P}}^{(\ell-1)}}^{\top}{\boldsymbol{H}}_A^{(\ell-1)} - 2{{\boldsymbol{H}}_A^{(\ell-1)}}^{\top}{{\boldsymbol{P}}^{(\ell-1)}}{{\boldsymbol{W}}_B^{(\ell)}}^{\top}{\boldsymbol{W}}_B^{(\ell)} {\boldsymbol{H}}_B^{(\ell-1)}\right)\\
    \iff&\underset{\pi = \{\boldsymbol{P}^{(\ell)}\}}{\arg\min}\sum_{\ell=1}^{L}{\rm \ tr} \left({{\boldsymbol{P}}^{(\ell-1)}}{{\boldsymbol{W}}_B^{(\ell)}}^{\top}{\boldsymbol{W}}_B^{(\ell)} {{\boldsymbol{P}}^{(\ell-1)}}^{\top}{\boldsymbol{H}}_A^{(\ell-1)}{{\boldsymbol{H}}_A^{(\ell-1)}}^{\top} - 2{{\boldsymbol{P}}^{(\ell-1)}}{{\boldsymbol{W}}_B^{(\ell)}}^{\top}{\boldsymbol{W}}_B^{(\ell)} {\boldsymbol{H}}_B^{(\ell-1)}{{\boldsymbol{H}}_A^{(\ell-1)}}^{\top}\right)\\
    \iff&\underset{\pi = \{\boldsymbol{P}^{(\ell)}\}}{\arg\min}\sum_{\ell=1}^{L}\left({\rm \ tr} \left({{\boldsymbol{P}}^{(\ell-1)}}{{\boldsymbol{W}}_B^{(\ell)}}^{\top}{\boldsymbol{W}}_B^{(\ell)} {{\boldsymbol{P}}^{(\ell-1)}}^{\top}{\boldsymbol{H}}_A^{(\ell-1)}{{\boldsymbol{H}}_A^{(\ell-1)}}^{\top}\right) - 2{\rm \ tr}\left({{\boldsymbol{P}}^{(\ell-1)}}{{\boldsymbol{W}}_B^{(\ell)}}^{\top}{\boldsymbol{W}}_B^{(\ell)} {\boldsymbol{H}}_B^{(\ell-1)}{{\boldsymbol{H}}_A^{(\ell-1)}}^{\top}\right)\right),\\
\end{align*}which also gives rise to Koopmans-Beckmann's QAPs. 

For the last term, i.e, \begin{align*}
    &\underset{\pi = \{\boldsymbol{P}^{(\ell)}\}}{\arg\min} \sum_{\ell=1}^{L}\left\langle {\boldsymbol{W}}_A^{(\ell)}{\boldsymbol{H}}_A^{(\ell-1)} - {\boldsymbol{W}}_A^{(\ell)} \boldsymbol P^{(\ell-1)} {\boldsymbol{H}}_B^{(\ell-1)} ,{\boldsymbol{P}}^{(\ell)} {\boldsymbol{W}}_B^{(\ell)} {{\boldsymbol{P}}^{(\ell-1)}}^{\top}{\boldsymbol{H}}_A^{(\ell-1)} - {\boldsymbol{P}}^{(\ell)} {\boldsymbol{W}}_B^{(\ell)}{\boldsymbol{H}}_B^{(\ell-1)} \right\rangle_F \\
    \iff&\underset{\pi = \{\boldsymbol{P}^{(\ell)}\}}{\arg\min} \sum_{\ell=1}^{L}\left\langle \left({\boldsymbol{W}}_A^{(\ell)}{\boldsymbol{H}}_A^{(\ell-1)} - {\boldsymbol{W}}_A^{(\ell)} \boldsymbol P^{(\ell-1)} {\boldsymbol{H}}_B^{(\ell-1)}\right) \left({\boldsymbol{W}}_B^{(\ell)} {{\boldsymbol{P}}^{(\ell-1)}}^{\top}{\boldsymbol{H}}_A^{(\ell-1)} -  {\boldsymbol{W}}_B^{(\ell)}{\boldsymbol{H}}_B^{(\ell-1)}\right)^{\top},{\boldsymbol{P}}^{(\ell)}  \right\rangle_F, 
\end{align*} which entails solving bi-level matching problems. 

Therefore, the objective can be rewritten as the summation of QAPs and bi-level matching problems and cannot be further simplified, which is NP-hard.

\section{More Experimental Details and Results} \label{suppl:more-exp}
\subsection{Detailed Experimental Settings} \label{suppl:settings}

In this section, we introduce the detailed experimental setup. Before delving into details, recall that unless otherwise specified, in this paper we consider models trained on a training set, and then all the investigations are evaluated on a test set.

\subsubsection{Spawning Method}

\paragraph{Multi-Layer Perceptrons on the MNIST Dataset.} In accordance with the settings outlined by \citet{ainsworth2023git}, we train multi-layer perceptron networks with three hidden layers, each consisting of 512 units, on the MNIST dataset. We adopt the ReLU activation between layers. Optimization is done with the Adam algorithm and a learning rate of $1.2\times 10^{-4}$. The batch size is set to $60$ and the total number of training epochs is $30$. To find the modes that satisfy LMC, we start spawning from a common initialization $\boldsymbol{\theta}^{(0)}$.

\paragraph{VGG-16 and ResNet-20 on the CIFAR-10 Dataset.} In accordance with the settings outlined by \citet{frankle2020linear}, we train the VGG-16 architecture~\citep{simonyan2015vgg} and the ResNet-20 architecture~\citep{kaiming2016residual} on the CIFAR-10 dataset. Data augmentation techniques include {random horizontal flips} and {random $32\times32$ pixel crops}. Optimization is done using SGD with momentum (momentum set to 0.9). A weight decay of $1\times 10^{-4}$ is applied. The learning rate is initialized at $0.1$ and is dropped by $10$ times at $80$ and $120$ epochs. The total number of epochs is $160$. To find the modes that satisfy LMC, we start spawning after training $5$ epochs for both VGG-16 and ResNet-20.

\paragraph{ResNet-50 on the Tiny-ImageNet Dataset.} In accordance with the settings outlined by \citet{frankle2020linear}, we train the ResNet-50 architecture~\citep{kaiming2016residual} on the Tiny-ImageNet dataset. Data augmentation techniques include {random horizontal flips} and {random $32\times32$ pixel crops}. Optimization is done using SGD with momentum (momentum set to 0.9). A weight decay of $1\times 10^{-4}$ is applied. The learning rate is set to $0.4$ and warmed up for $5$ epochs and then is dropped by $10$ times at $30$, $60$ and $80$ epochs. The total number of epochs is $90$. To find the modes that satisfy LMC, we start spawning after training $14$ epochs.

\subsubsection{Permutation Method}

For the permutation method, we follow the experimental settings of \citet{ainsworth2023git} strictly, which are described below.

\paragraph{Multi-Layer Perceptrons on MNIST and CIFAR-10.} Similar to the spawning method, we use multi-layer perceptron (MLP) networks with three hidden layers, each consisting of 512 units. For MNIST, optimization is performed using Adam with a learning rate of $1\times 10^{-3}$. For CIFAR-10, optimization is performed using SGD with a learning rate of $0.1$. Both activation matching and weight matching are used to identify modes that satisfy LMC.

\paragraph{ResNet-20 on CIFAR-10.} To achieve LMC, we modify the ResNet-20 architecture by incorporating LayerNorms in place of BatchNorms. Furthermore, we increase the width of ResNet-20 by a factor of 32. Data augmentation techniques include random horizontal flips, random $32\times32$ pixel crops, random resizes of the image between $0.8\times$ and $1.2\times$, and random rotations between $\pm 30^{\circ}$. The optimization process involves using SGD with momentum (set to 0.9). A weight decay regularization term of $5\times 10^{-4}$ is applied. A single cosine decay schedule with a linear warm-up is applied, where the learning rate is initialized to $1\times 10^{-6}$ and gradually increased to $0.1$ over the course of an epoch, and then a single cosine decay schedule is applied for the remaining training. Only weight matching is used to identify modes that satisfy LMC.

Unlike the spawning method, VGG models are not used in the permutation method due to their inability to achieve LMC. Additionally, \citet{ainsworth2023git} open-sourced their source code and pre-trained checkpoints. Therefore, we directly use the pre-trained checkpoints provided by \citet{ainsworth2023git}.

\subsection{Verification of LLFC Co-Occuring with LMC} \label{suppl:more-LLFC}

\begin{figure}[tb!]
    \centering
    \includegraphics[width=0.71\textwidth]{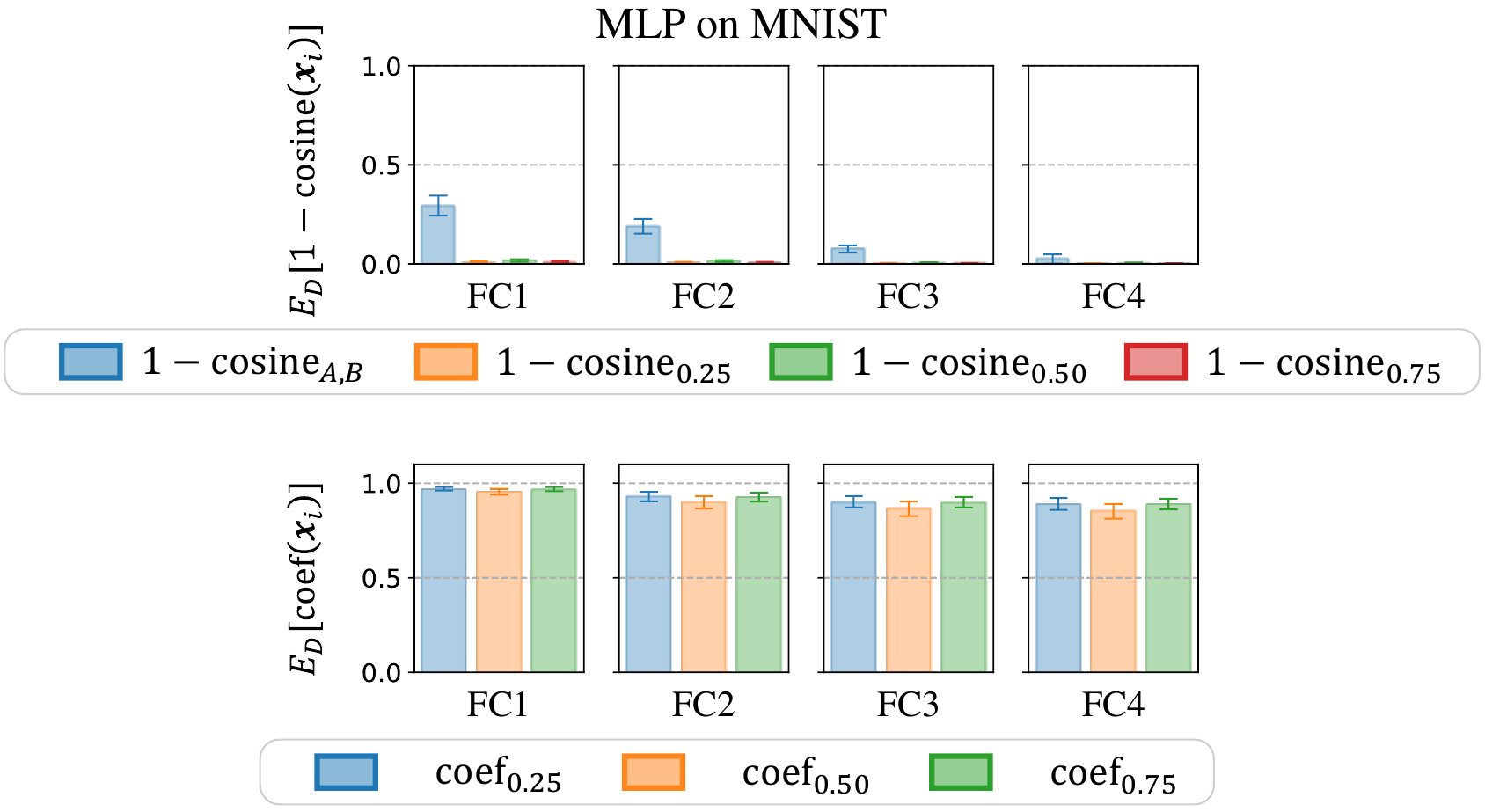}
    \caption{Comparison between $\mathbb{E}_{\mathcal{D}}[1-{\rm cosine}_{\alpha}(\boldsymbol{x}_i)]$ and $\mathbb{E}_{\mathcal{D}}[1-{\rm cosine}_{A, B}(\boldsymbol{x}_i)]$ and demonstration of $\mathbb{E}_{\mathcal{D}}[1-{\rm coef}_{\alpha}(\boldsymbol{x}_i)]$. The spawning method is used to obtain two linearly connected modes ${\boldsymbol{\theta}}_A$ and ${\boldsymbol{\theta}}_B$. Results are presented for different layers of MLP on MNIST dataset, with $\alpha \in \{0.25, 0.5, 0.75\}$. Standard deviations across the dataset are reported by error bars.}
    \label{fig:suppl-LLFC-1}
\end{figure}

\begin{figure}[tb!]
    \centering
    \includegraphics[width=0.81\textwidth]{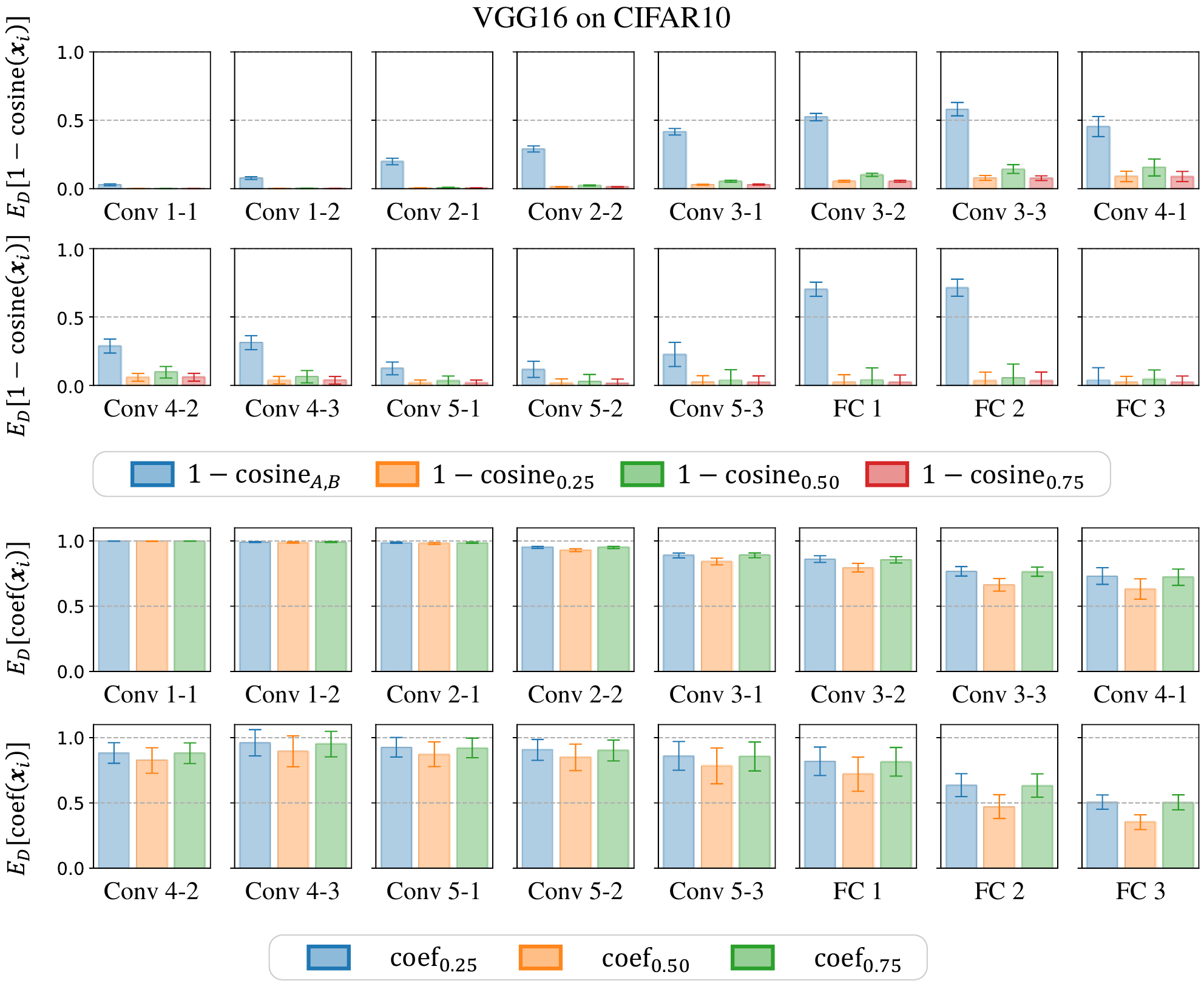}
    \caption{Comparison between $\mathbb{E}_{\mathcal{D}}[1-{\rm cosine}_{\alpha}(\boldsymbol{x}_i)]$ and $\mathbb{E}_{\mathcal{D}}[1-{\rm cosine}_{A, B}(\boldsymbol{x}_i)]$ and demonstration of $\mathbb{E}_{\mathcal{D}}[1-{\rm coef}_{\alpha}(\boldsymbol{x}_i)]$. The spawning method is used to obtain two linearly connected modes ${\boldsymbol{\theta}}_A$ and ${\boldsymbol{\theta}}_B$. Results are presented for different layers of VGG-16 on the CIFAR-10 dataset, with $\alpha \in \{0.25, 0.5, 0.75\}$. Standard deviations across the dataset are reported by error bars.}
    \label{fig:suppl-LLFC-2}
\end{figure}

\begin{figure}[tb!]
    \centering
    \includegraphics[width=\textwidth]{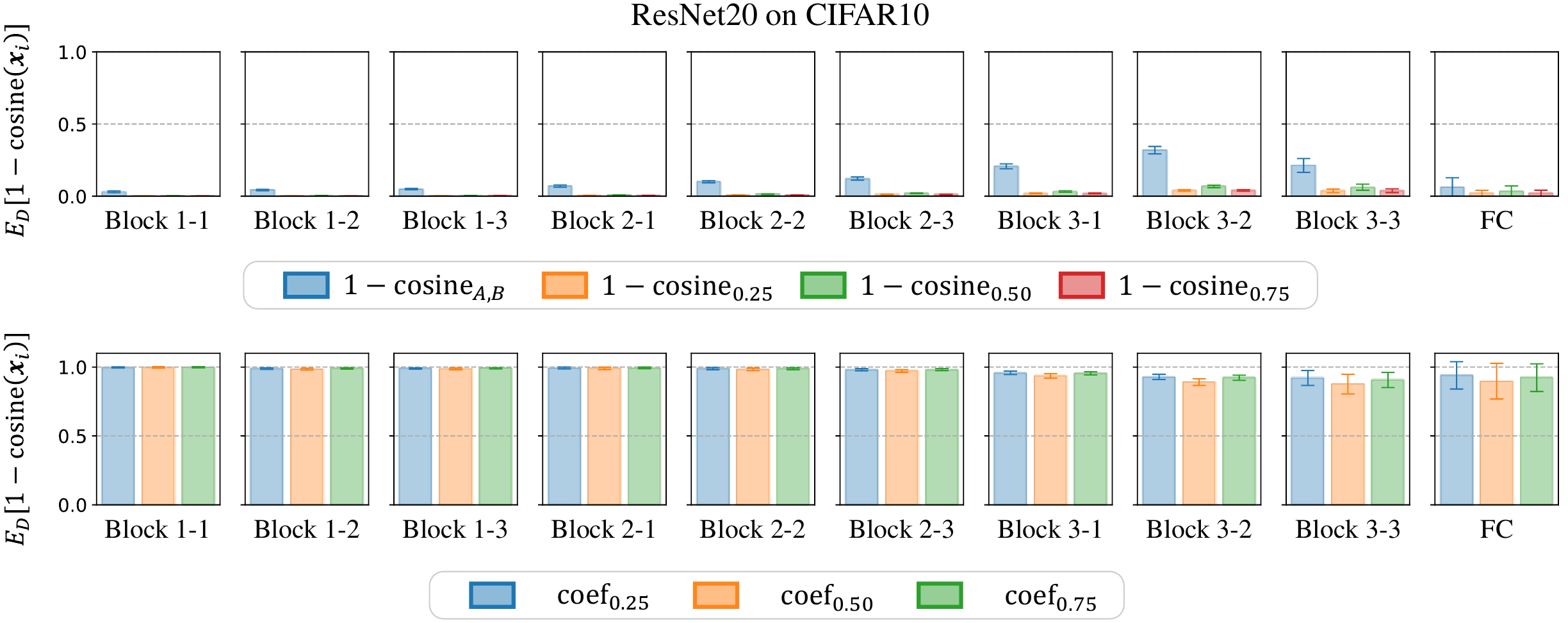}
    \caption{Comparison between $\mathbb{E}_{\mathcal{D}}[1-{\rm cosine}_{\alpha}(\boldsymbol{x}_i)]$ and $\mathbb{E}_{\mathcal{D}}[1-{\rm cosine}_{A, B}(\boldsymbol{x}_i)]$ and demonstration of $\mathbb{E}_{\mathcal{D}}[1-{\rm coef}_{\alpha}(\boldsymbol{x}_i)]$. The spawning method is used to obtain two linearly connected modes ${\boldsymbol{\theta}}_A$ and ${\boldsymbol{\theta}}_B$. Results are presented for different layers of ResNet-20 on the CIFAR-10 dataset, with $\alpha \in \{0.25, 0.5, 0.75\}$. Standard deviations across the dataset are reported by error bars.}
    \label{fig:suppl-LLFC-3}
\end{figure}

\begin{figure}[tb!]
    \centering
    \includegraphics[width=0.81\textwidth]{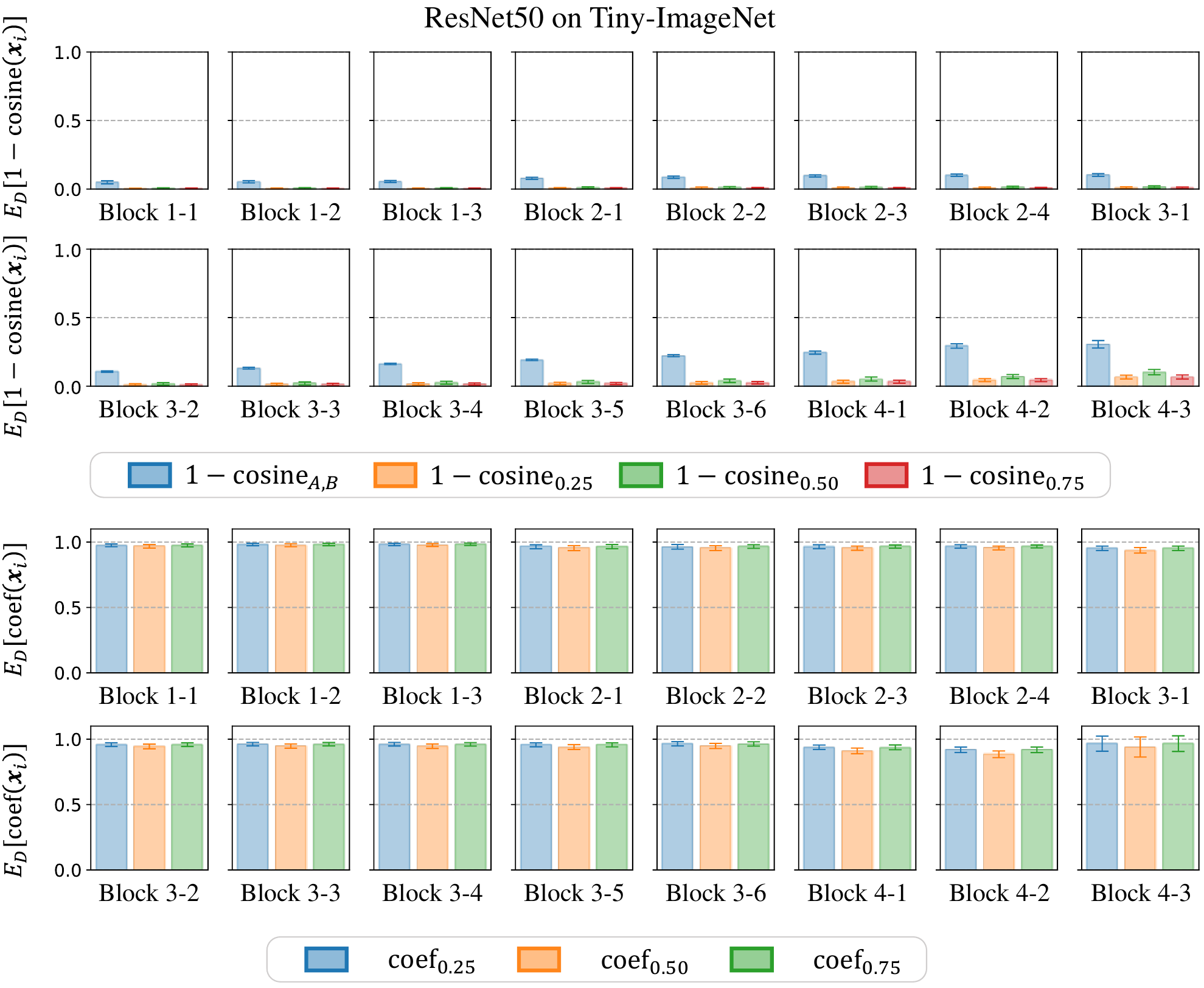}
    \caption{Comparison between $\mathbb{E}_{\mathcal{D}}[1-{\rm cosine}_{\alpha}(\boldsymbol{x}_i)]$ and $\mathbb{E}_{\mathcal{D}}[1-{\rm cosine}_{A, B}(\boldsymbol{x}_i)]$ and demonstration of $\mathbb{E}_{\mathcal{D}}[1-{\rm coef}_{\alpha}(\boldsymbol{x}_i)]$. The spawning method is used to obtain two linearly connected modes ${\boldsymbol{\theta}}_A$ and ${\boldsymbol{\theta}}_B$. Results are presented for different layers of ResNet-50 on the Tiny-ImageNet dataset, with $\alpha \in \{0.25, 0.5, 0.75\}$. Standard deviations across the dataset are reported by error bars.}
    \label{fig:suppl-LLFC-tinyimagenet}
\end{figure}

\begin{figure}[tb!]
    \centering
    \includegraphics[width=0.896\textwidth]{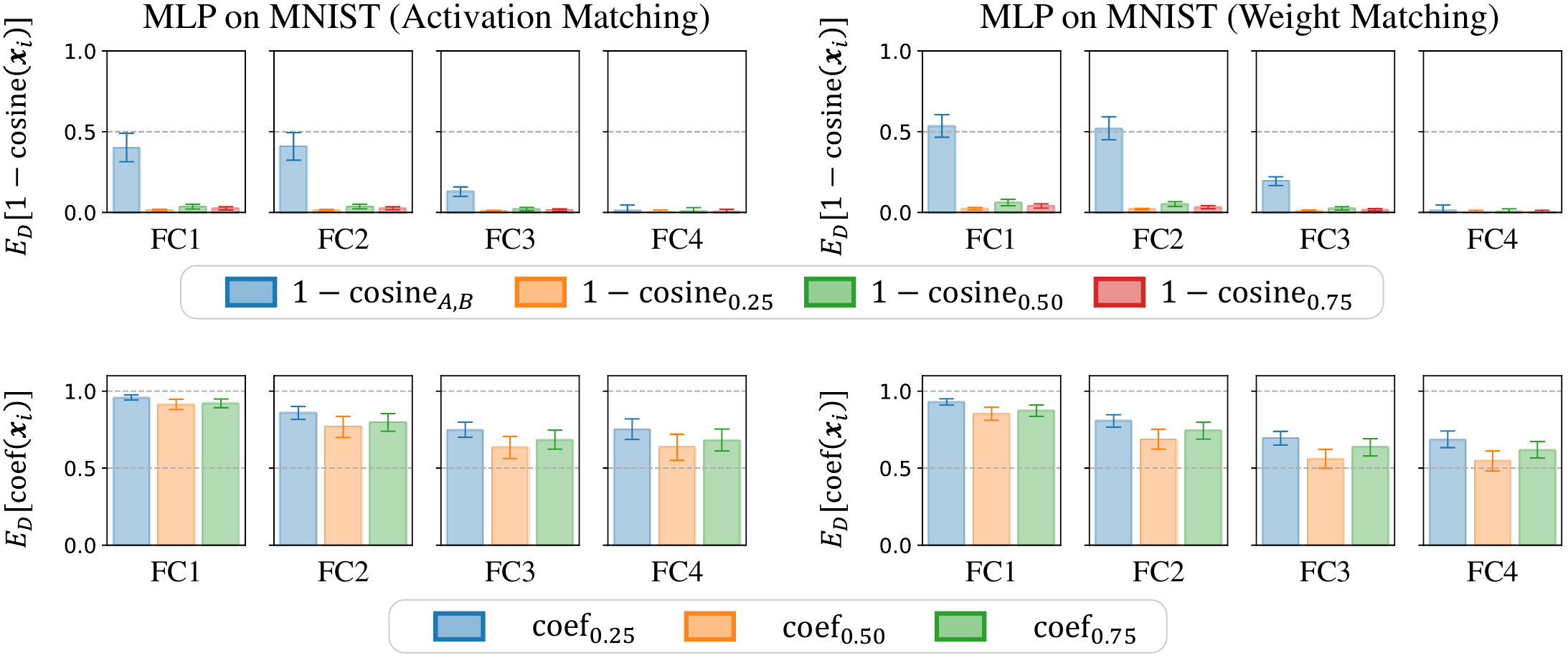}
    \caption{Comparison between $\mathbb{E}_{\mathcal{D}}[1-{\rm cosine}_{\alpha}(\boldsymbol{x}_i)]$ and $\mathbb{E}_{\mathcal{D}}[1-{\rm cosine}_{A, B}(\boldsymbol{x}_i)]$ and demonstration of $\mathbb{E}_{\mathcal{D}}[1-{\rm coef}_{\alpha}(\boldsymbol{x}_i)]$. The activation matching and the weight matching are used to obtain two linearly connected modes ${\boldsymbol{\theta}}_A$ and ${\boldsymbol{\theta}}_B$. Results are presented for different layers of MLP on the MNIST dataset, with $\alpha \in \{0.25, 0.5, 0.75\}$. Standard deviations across the dataset are reported by error bars.}
    \label{fig:suppl-LLFC-4}
\end{figure}

\begin{figure}[tb!]
    \centering
    \includegraphics[width=0.896\textwidth]{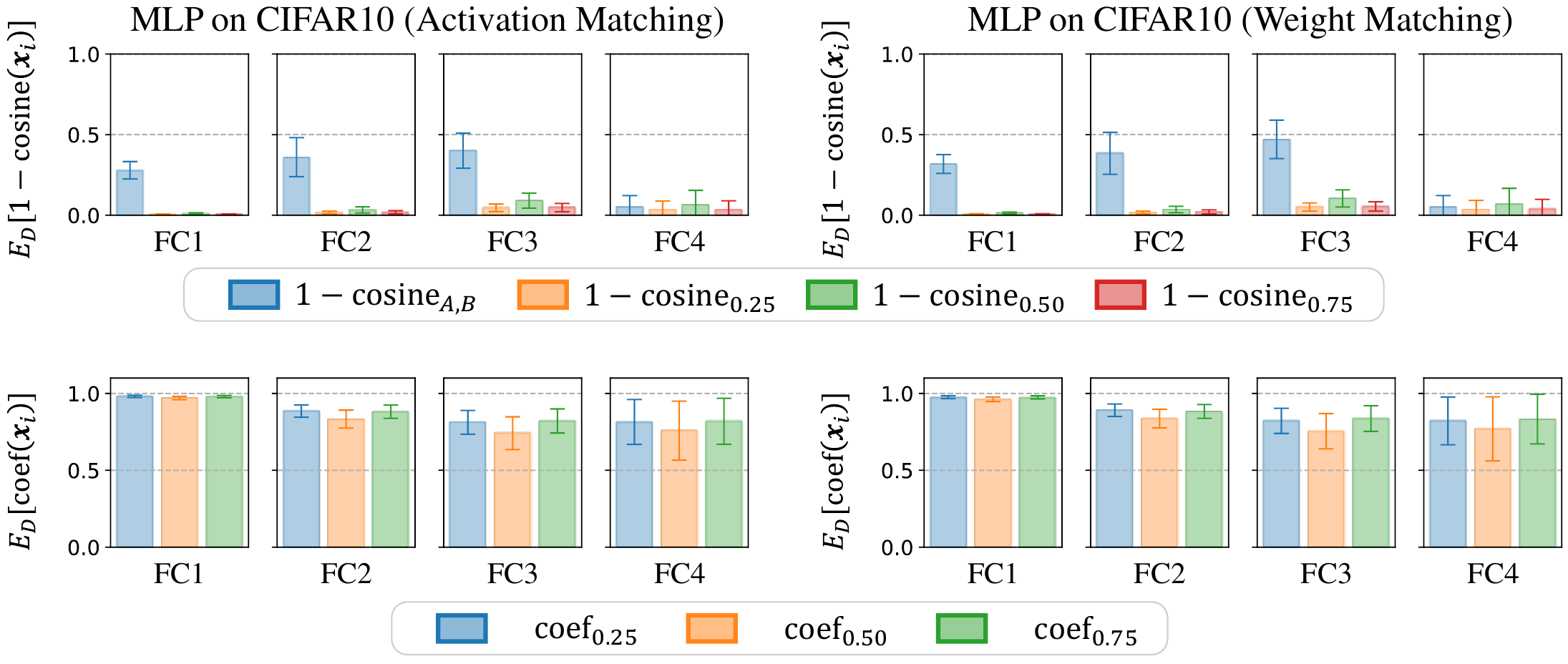}
    \caption{Comparison between $\mathbb{E}_{\mathcal{D}}[1-{\rm cosine}_{\alpha}(\boldsymbol{x}_i)]$ and $\mathbb{E}_{\mathcal{D}}[1-{\rm cosine}_{A, B}(\boldsymbol{x}_i)]$ and demonstration of $\mathbb{E}_{\mathcal{D}}[1-{\rm coef}_{\alpha}(\boldsymbol{x}_i)]$. The activation matching and the weight matching are used to obtain two linearly connected modes ${\boldsymbol{\theta}}_A$ and ${\boldsymbol{\theta}}_B$. Results are presented for different layers of MLP on the CIFAR-10 dataset, with $\alpha \in \{0.25, 0.5, 0.75\}$. Standard deviations across the dataset are reported by error bars.}
    \label{fig:suppl-LLFC-5}
\end{figure}

\begin{figure}[tb!]
    \centering
    \includegraphics[width=0.896\textwidth]{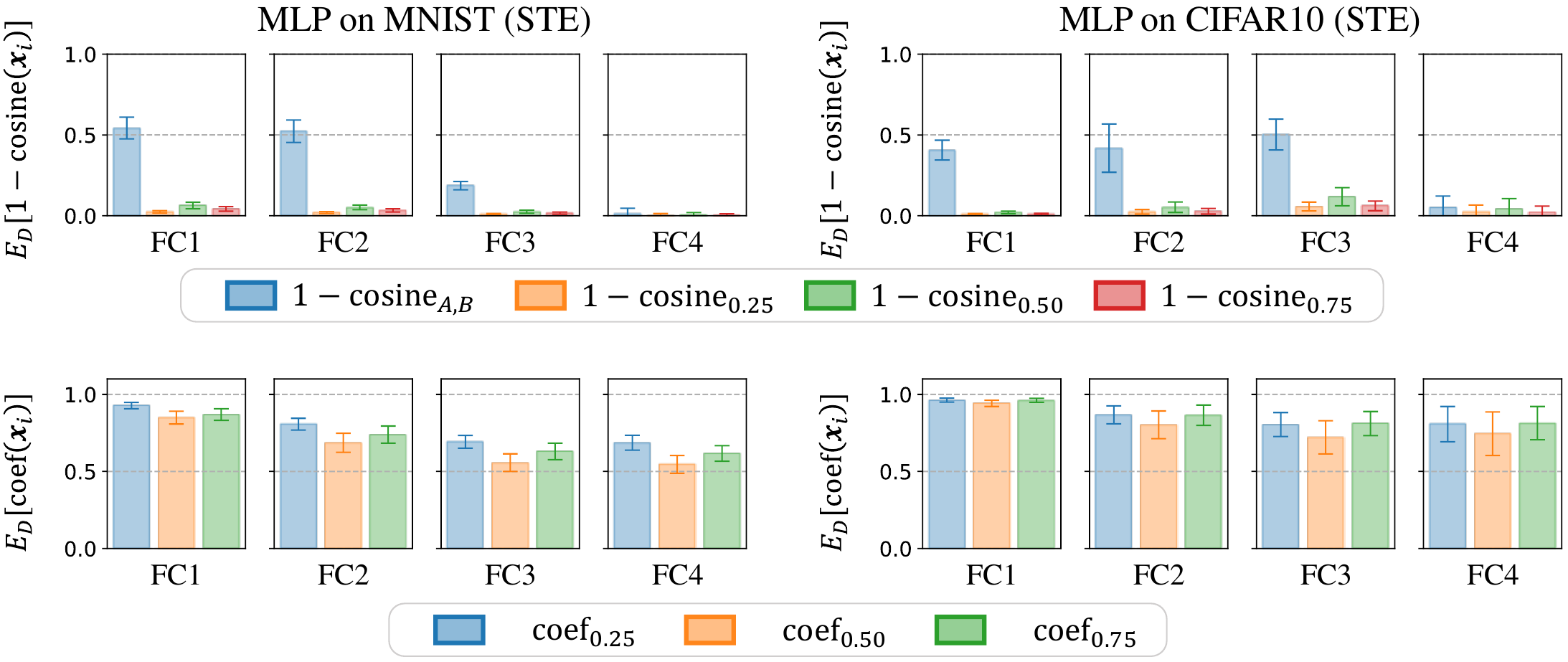}
    \caption{Comparison between $\mathbb{E}_{\mathcal{D}}[1-{\rm cosine}_{\alpha}(\boldsymbol{x}_i)]$ and $\mathbb{E}_{\mathcal{D}}[1-{\rm cosine}_{A, B}(\boldsymbol{x}_i)]$ and demonstration of $\mathbb{E}_{\mathcal{D}}[1-{\rm coef}_{\alpha}(\boldsymbol{x}_i)]$. The Straight-Through Estimator (STE)~\citep{ainsworth2023git} are used to obtain two linearly connected modes ${\boldsymbol{\theta}}_A$ and ${\boldsymbol{\theta}}_B$. Results are presented for different layers of MLP on both MNIST and CIFAR-10 dataset, with $\alpha \in \{0.25, 0.5, 0.75\}$. Standard deviations across the dataset are reported by error bars.}
    \label{fig:suppl-LLFC-ste}
\end{figure}

\begin{figure}[tb!]
    \centering
    \includegraphics[width=\textwidth]{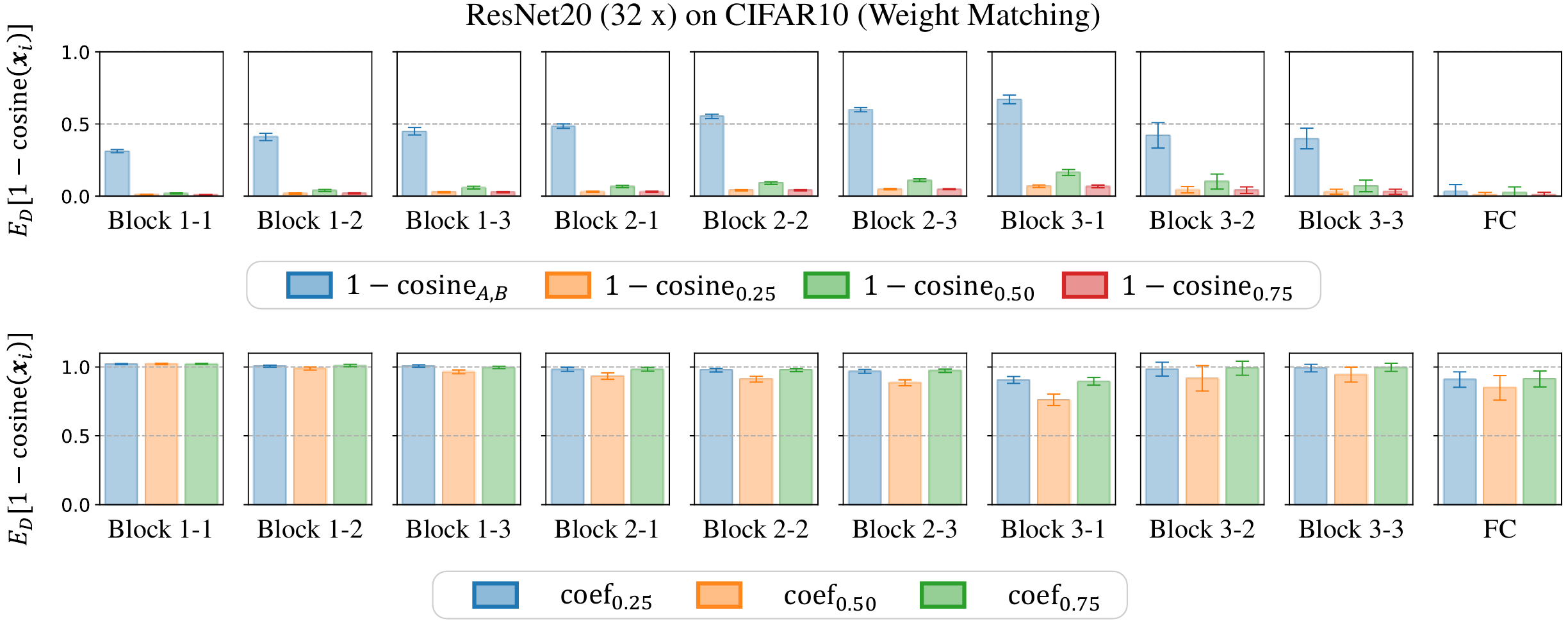}
    \caption{Comparison between $\mathbb{E}_{\mathcal{D}}[1-{\rm cosine}_{\alpha}(\boldsymbol{x}_i)]$ and $\mathbb{E}_{\mathcal{D}}[1-{\rm cosine}_{A, B}(\boldsymbol{x}_i)]$ and demonstration of $\mathbb{E}_{\mathcal{D}}[1-{\rm coef}_{\alpha}(\boldsymbol{x}_i)]$. The weight matching is used to obtain two linearly connected modes ${\boldsymbol{\theta}}_A$ and ${\boldsymbol{\theta}}_B$. Results are presented for different layers of ResNet-20 (32x) on the CIFAR-10 dataset, with $\alpha \in \{0.25, 0.5, 0.75\}$. Standard deviations across the dataset are reported by error bars.}
    \label{fig:suppl-LLFC-6}
\end{figure}

In this section, we provide extensive experimental results to verify that LLFC consistently co-occurs with LMC, and conduct a new experiment to demonstrate that the constant $c$ is close to 1 in most cases. Both the spawning method and the permutation method are utilized to obtain linearly connected modes ${\boldsymbol{\theta}}_A$ and ${\boldsymbol{\theta}}_B$. As shown in \cref{fig:suppl-LLFC-1,fig:suppl-LLFC-2,fig:suppl-LLFC-3,fig:suppl-LLFC-tinyimagenet,fig:suppl-LLFC-4,fig:suppl-LLFC-5,fig:suppl-LLFC-6}, we include experimental results for MLP on the MNIST dataset (spawning method, activation matching, and weight matching), MLP on the CIFAR-10 dataset (both activation matching and weight matching), VGG-16 on the CIFAR-10 dataset (spawning method), ResNet-20 on the CIFAR-10 dataset (spawning method and weight matching) and ResNet-50 on the Tiny-ImageNet dataset (spawning method). In particular, in \cref{fig:suppl-LLFC-ste}, we include experimental results of Straight-Trough Estimator (STE) \citep{ainsworth2023git}. STE method tries to learn a permutation with STE that could minimize the loss barrier between one mode and the other permuted mode. 

To verify the LLFC property on each data point $\boldsymbol{x}_i$ in the test set $\mathcal{D}$, we measure ${\rm cosine}_{\alpha}(\boldsymbol{x}_i) = \cos[f^{(\ell)}(\alpha {\boldsymbol{\theta}}_{A} + (1-\alpha) {\boldsymbol{\theta}}_B; \boldsymbol{x}_i), \alpha f^{(\ell)}({\boldsymbol{\theta}}_A; \boldsymbol{x}_i) + (1-\alpha) f^{(\ell)}({\boldsymbol{\theta}}_B; \boldsymbol{x}_i)]$. We compare this to the baseline cosine similarity ${\rm cosine}_{A, B}(\boldsymbol{x}_i) = \cos[f^{(\ell)}({\boldsymbol{\theta}}_A; \boldsymbol{x}_i), f^{(\ell)}({\boldsymbol{\theta}}_B; \boldsymbol{x}_i)]$. In \cref{fig:suppl-LLFC-1,fig:suppl-LLFC-2,fig:suppl-LLFC-3,fig:suppl-LLFC-tinyimagenet,fig:suppl-LLFC-4,fig:suppl-LLFC-5,fig:suppl-LLFC-ste,fig:suppl-LLFC-6}, we conclude that the values of $\mathbb{E}_{\mathcal{D}}[1-{\rm cosine}_{\alpha}(\boldsymbol{x}_i)]$ are close to $0$ compared with $\mathbb{E}_{\mathcal{D}}[1-{\rm cosine}_{A, B}(\boldsymbol{x}_i)]$, and thus verify our claim.

To show that the constant $c$ is close to 1 in most cases,  for each data point $\boldsymbol{x}_i$ in the test set $\mathcal{D}$, we measure ${\rm coef}_{\alpha}(\boldsymbol{x}_i) =  \|f^{(\ell)}(\alpha {\boldsymbol{\theta}}_{A} + (1-\alpha) {\boldsymbol{\theta}}_B; \boldsymbol{x}_i)\| {\rm cosine}_{\alpha}(\boldsymbol{x}_i)/ \|\alpha f^{(\ell)}({\boldsymbol{\theta}}_A; \boldsymbol{x}_i) + (1-\alpha) f^{(\ell)}({\boldsymbol{\theta}}_B; \boldsymbol{x}_i)]\|$, where $ \|f^{(\ell)}(\alpha {\boldsymbol{\theta}}_{A} + (1-\alpha) {\boldsymbol{\theta}}_B; \boldsymbol{x}_i)\|{\rm cosine}_{\alpha}(\boldsymbol{x}_i)$ denotes the  length of $f^{(\ell)}(\alpha {\boldsymbol{\theta}}_{A} + (1-\alpha) {\boldsymbol{\theta}}_B; \boldsymbol{x}_i)$ projected on $\alpha f^{(\ell)}({\boldsymbol{\theta}}_A; \boldsymbol{x}_i) + (1-\alpha) f^{(\ell)}({\boldsymbol{\theta}}_B; \boldsymbol{x}_i)]$. In \cref{fig:suppl-LLFC-1,fig:suppl-LLFC-2,fig:suppl-LLFC-3,fig:suppl-LLFC-4,fig:suppl-LLFC-tinyimagenet,fig:suppl-LLFC-5,fig:suppl-LLFC-ste,fig:suppl-LLFC-6}, we conclude that the values of $\mathbb{E}_{\mathcal{D}}[{\rm coef}_{\alpha}(\boldsymbol{x}_i)]$ are close to $1$ in most cases, and thus verify our claim.

\subsection{Verification of Commutativity} \label{suppl:more-commutativity}

\begin{figure}[tb!]
    \centering
    \includegraphics[width=0.786\textwidth]{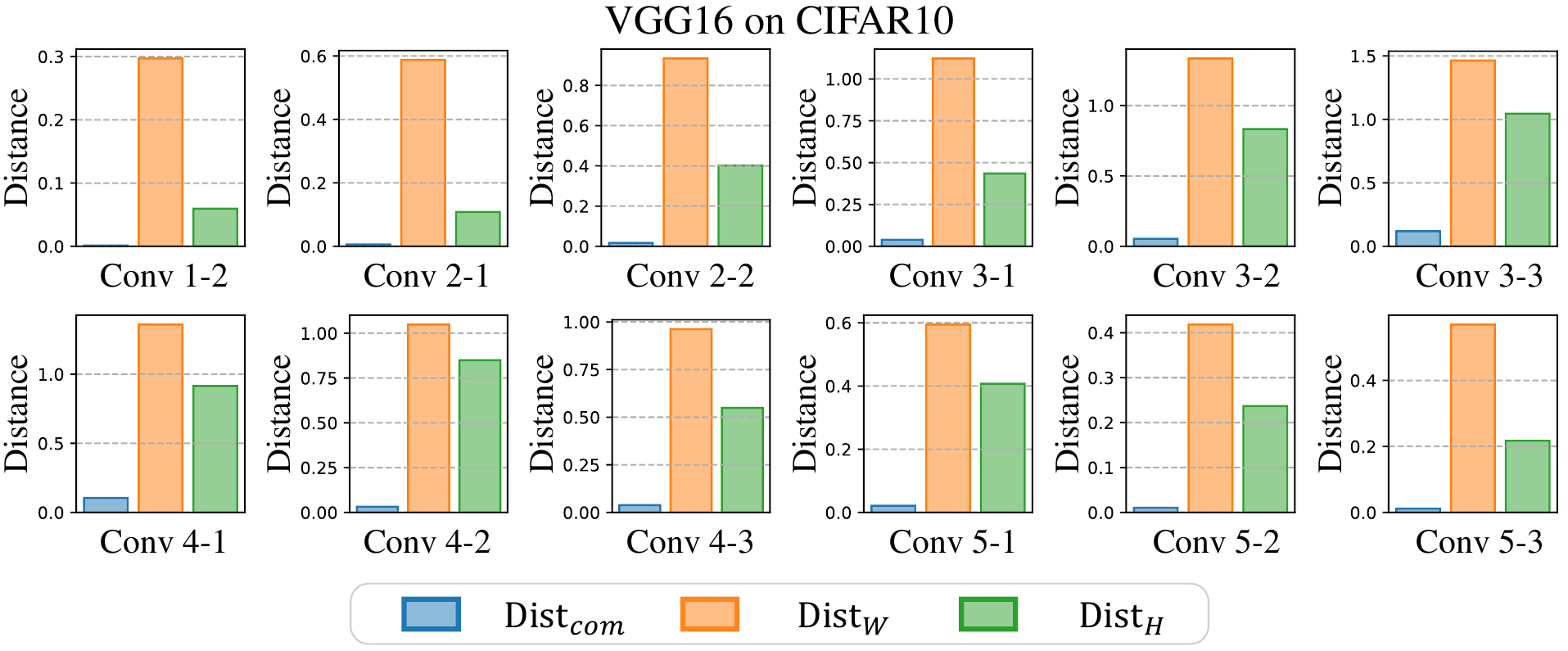}
    \caption{Comparison of $\text{Dist}_{com}$, $\text{Dist}_W$, and $\text{Dist}_H$. The spawning method is used to obtain two modes that satisfy LLFC, ${\boldsymbol{\theta}}_A$ and ${\boldsymbol{\theta}}_B$. The results are presented for different layers of VGG-16 on the CIFAR-10 dataset.}
    \label{fig:suppl-commutativity-1}
\end{figure}

\begin{figure}[tb!]
    \centering
    \includegraphics[width=0.423\textwidth]{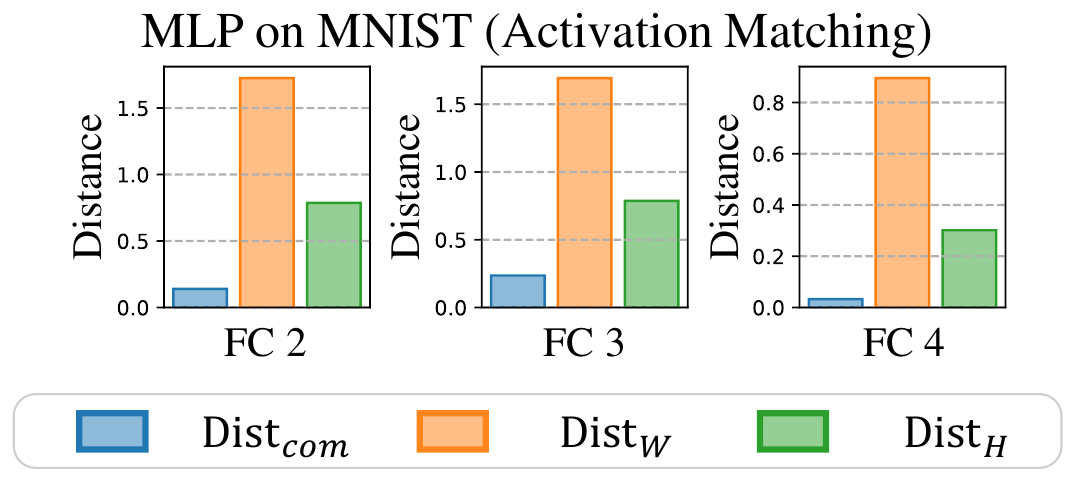}
    \caption{Comparison of $\text{Dist}_{com}$, $\text{Dist}_W$, and $\text{Dist}_H$. The activation matching is used to obtain two modes that satisfy LLFC, ${\boldsymbol{\theta}}_A$ and ${\boldsymbol{\theta}}_B$. The results are presented for different layers of MLP on the MNIST dataset.}
    \label{fig:suppl-commutativity-2}
\end{figure}

\begin{figure}[tb!]
    \centering
    \includegraphics[width=0.804\textwidth]{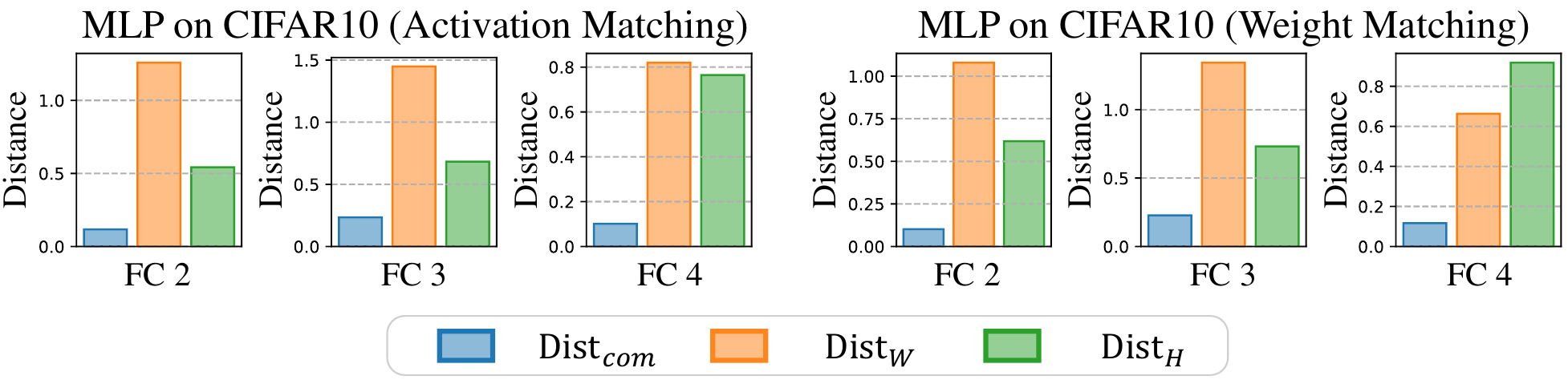}
    \caption{Comparison of $\text{Dist}_{com}$, $\text{Dist}_W$, and $\text{Dist}_H$. Both the activation matching and weight matching are used to obtain two modes that satisfy LLFC, ${\boldsymbol{\theta}}_A$ and ${\boldsymbol{\theta}}_B$. The results are presented for different layers of MLP on the CIFAR10 dataset.}
    \label{fig:suppl-commutativity-3}
\end{figure}

In this section, we provide more experimental results on various datasets and model architectures to verify the commutativity property for modes that satisfy LLFC. As shown in \cref{fig:suppl-commutativity-1,fig:suppl-commutativity-2,fig:suppl-commutativity-3}, we include more experiments results for VGG-16 on the CIFAR-10 dataset (spawning method), MLP on the MNIST dataset (activation matching) and MLP on the CIFAR-10 dataset (both activation matching and weight matching).

To verify the commutativity generally holds for modes that satisfy LLFC, for test set $\mathcal{D}$, we compute $\text{Dist}_{com} = \text{dist}\left(\text{vec}({\boldsymbol{W}}_A^{(\ell)} {\boldsymbol H}_A^{(\ell-1)} + {\boldsymbol W}_B^{(\ell)}{\boldsymbol H}_B^{(\ell-1)}), \text{vec}({\boldsymbol W}_A^{(\ell)} {\boldsymbol H}_B^{(\ell-1)} + {\boldsymbol W}_B^{(\ell)} {\boldsymbol H}_A^{(\ell-1)})\right)$\footnote{We also conduct experiments on CNNs. For a Conv layer, the forward propagation will be denoted as $\boldsymbol{W}\boldsymbol{H}$ similar to a linear layer.}. Furthermore, we compare $\text{Dist}_{com}$ with $\text{Dist}_W = \text{dist}\left(\text{vec}({\boldsymbol{W}}_A^{(\ell)}), \text{vec}({\boldsymbol{W}}_B^{(\ell)})\right)$ and $\text{Dist}_H = \text{dist}\left(\text{vec}({\boldsymbol{H}}_A^{(\ell-1)}), \text{vec}({\boldsymbol{H}}_B^{(\ell-1)})\right)$, respectively.  In \cref{fig:suppl-commutativity-1,fig:suppl-commutativity-2,fig:suppl-commutativity-3}, $\text{Dist}_{com}$ is negligible compared with $\text{Dist}_{W}$ and $\text{Dist}_{H}$, confirming the commutativity condition. 

\begin{figure}[tb!]
    \centering
    \includegraphics[width=\textwidth]{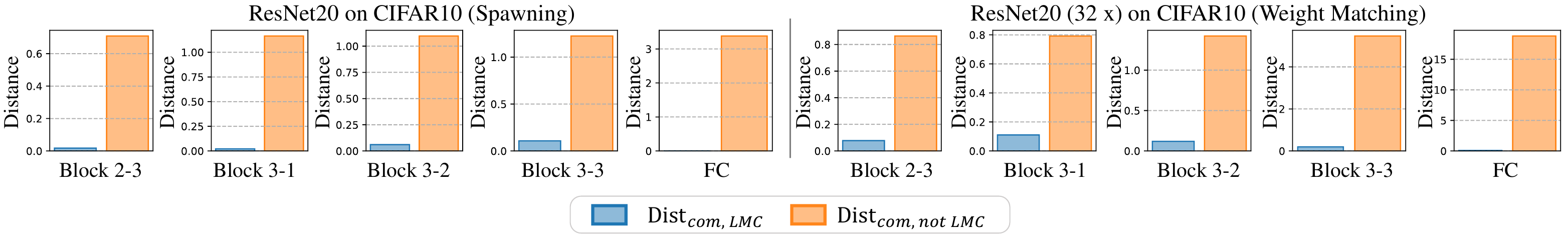}
    \caption{Comparison between $\text{Dist}_{com, LMC}$ and $\text{Dist}_{com, not\ LMC}$. Both the spawning and permutation methods are used to obtain two linearly connected modes.}
    \label{fig:suppl-baseline-comm}
\end{figure}

Furthermore, we add baselines of models that are not linearly connected to further validate the commutativity condition. In \cref{fig:suppl-baseline-comm}, we include experimental results for ResNet-20 on CIFAR-10 dataset (both spawning and weight matching method). Specifically, we measure $\text{Dist}_{com, LMC}$ of two linearly connected modes and $\text{Dist}_{com, not\ LMC}$ of two independently trained modes. In \cref{fig:suppl-baseline-comm}, the values of $\text{Dist}_{com, LMC}$ are negligible compared with $\text{Dist}_{com, not\ LMC}$, which confirms the commutativity condition.

Notably, the experiments are not conducted on the first Conv/Linear layer of the model because the commutativity condition is naturally satisfied for the first layer where ${\boldsymbol H}_A^{(0)} = {\boldsymbol H}_B^{(0)} = \boldsymbol{X}$ where $\boldsymbol{X}$ is the input data matrix.

\subsection{Experiments on Model Stitching} \label{suppl:more-model-stitching}

\begin{table}[tb!]
    \centering
    \resizebox{0.3\textwidth}{!}{
        \begin{tabular}{@{}lccc@{}}
        \toprule
        Layer $\ell$ & FC 1 & FC 2 & FC 3 \\ \midrule
        ${\rm Err}_{\mathcal{D}(B_{> \ell} \circ A_{\leq \ell})}$ & 2.69 & 2.11 & 1.92 \\ \bottomrule
        \end{tabular}
    }
    \vspace{5pt}
    \caption{Error rates (\%) of stitched MLP on the MNIST test set. The model stitching is employed in different layers. The spawning method is used to obtain two neural networks that satisfy LLFC, i.e., $A$ and $B$. Error rates (\%) of $A$ and $B$ are $1.9$ and $1.77$, respectively.}
    \label{tab:suppl-model-stitch-1}
\end{table}

\begin{table}[tb!]
    \centering
    \resizebox{0.58\textwidth}{!}{%
    \begin{tabular}{@{}lccccc@{}}
    \toprule
    Layer $\ell$ & Conv 1-1 & Conv 1-2 & Conv 2-1 & Conv 2-2 & Conv 3-1 \\ \midrule
    ${\rm Err}_{\mathcal{D}(B_{> \ell} \circ A_{\leq \ell})}$     & 7.2      & 8.43     & 8.39     & 9.91     & 11.84    \\ \bottomrule
    \toprule
    Layer $\ell$ & Conv 3-2 & Conv 3-3 & Conv 4-1 & Conv 4-2 & Conv 4-3 \\ \midrule
    ${\rm Err}_{\mathcal{D}(B_{> \ell} \circ A_{\leq \ell})}$     & 9.55     & 8.22     & 7.61     & 6.99     & 7.05     \\ \bottomrule
    \toprule
    Layer $\ell$ & Conv 5-1 & Conv 5-2 & Conv 5-3 & FC 1     & FC 2     \\ \midrule
    ${\rm Err}_{\mathcal{D}(B_{> \ell} \circ A_{\leq \ell})}$     & 6.91     & 6.88     & 6.88     & 7.07     & 6.92     \\ \bottomrule
    \end{tabular}%
    }
    \vspace{5pt}
    \caption{Error rates (\%) of stitched VGG-16 on the CIFAR-10 test set. The model stitching is employed in different layers. The spawning method is used to obtain two neural networks that satisfy LLFC, i.e., $A$ and $B$. Error rates (\%) of $A$ and $B$ are $6.87$ and $7.1$, respectively.}
    \label{tab:suppl-model-stitch-2}
\end{table}

\begin{table}[tb!]
    \centering
    \resizebox{\textwidth}{!}{
        \begin{tabular}{@{}lccccccccc@{}}
        \toprule
        Layer $\ell$ & Block 1-1 & Block 1-2 & Block 1-3 & Block 2-1 & Block 2-2 & Block 2-3 & Block 3-1 & Block 3-2 & Block 3-3\\ \midrule
        ${\rm Err}_{\mathcal{D}(B_{> \ell} \circ A_{\leq \ell})}$ & 10.88 & 10.57 & 13.35 & 10.64 & 10.74 & 10.55 & 12.27 & 11.8 & 8.99 \\ \bottomrule
        \end{tabular}
    }
    \vspace{5pt}
    \caption{Error rates (\%) of stitched ResNet-20 on the CIFAR-10 test set. The model stitching is employed in different layers. The spawning method is used to obtain two neural networks that satisfy LLFC, i.e., $A$ and $B$. Error rates (\%) of $A$ and $B$ are $8.69$ and $8.58$, respectively.}
    \label{tab:suppl-model-stitch-3}
\end{table}

Model stitching \cite{lenc2015understanding,bansal2021revisiting} is commonly employed to analyze neural networks' internal representations. Let $A$ and $B$ represent neural networks with identical architectures. Given a loss function $\mathcal{L}$, model stitching involves finding a stitching layer $s$ (e.g., a linear 1 × 1 convolutional layer) such that the minimization of $\mathcal{L}(B_{> \ell} \circ s \circ A_{\leq \ell})$ is achieved. Here, $B_{> \ell}$ denotes the mapping from the activations of the $\ell$-th layer of network $B$ to the final output, $A_{\leq \ell}$ denotes the mapping from the input to the activations of the $\ell$-th layer of network $A$, and $\circ$ represents function composition.

In this section, we explore a stronger form of model stitching. Specifically, given two neural networks $A$ and $B$ that satisfy LLFC, we evaluate the arruacy of $B_{> \ell} \circ A_{\leq \ell}$ over the test set $\mathcal{D}$ without finding a stitching layer, i.e., ${\rm Err}_{\mathcal{D}(B_{> \ell} \circ A_{\leq \ell})}$. As shown in \cref{tab:suppl-model-stitch-1,tab:suppl-model-stitch-2,tab:suppl-model-stitch-3}, we include experimental results for MLP on the MNIST dataset, VGG-16 on CIFAR-10 the dataset and ResNet-20 on the CIFAR-10 dataset. Only the spawning method is utilized to find modes that satisfy LLFC. The results depicted in \cref{tab:suppl-model-stitch-1,tab:suppl-model-stitch-2,tab:suppl-model-stitch-3} demonstrate that the error rates of the stitched model on the test set closely resemble the error rates of the original models $A$ and $B$, regardless of the dataset or model architecture. This observation suggests that models that satisfy LLFC encode similar information, which can be decoded across different models. Subsequently, the experiments of model stitching provides new insights towards the commutativity property, i.e, $\forall \ell \in [L], {\boldsymbol{W}}_B^{(\ell)} {\boldsymbol H}_A^{(\ell-1)} \approx {\boldsymbol{W}}_B^{(\ell)} {\boldsymbol H}_B^{(\ell-1)}$.

\subsection{Discussion on Git Re-basin \cite{ainsworth2023git}} \label{suppl:git-re-basin}

In this section, we investigate the ability of permutation methods to achieve LMC. While we have interpreted the activation matching and weight matching methods proposed by \citet{ainsworth2023git} as guaranteeing the commutativity property, we have yet to address why permutation methods can ensure the satisfaction of this property. Thus, in order to delve into the capability of permutation methods, we must address the question of why these methods are capable of ensuring the satisfaction of the commutativity property.

\begin{figure}[tb!]
    \centering
    \includegraphics[width=0.915\textwidth]{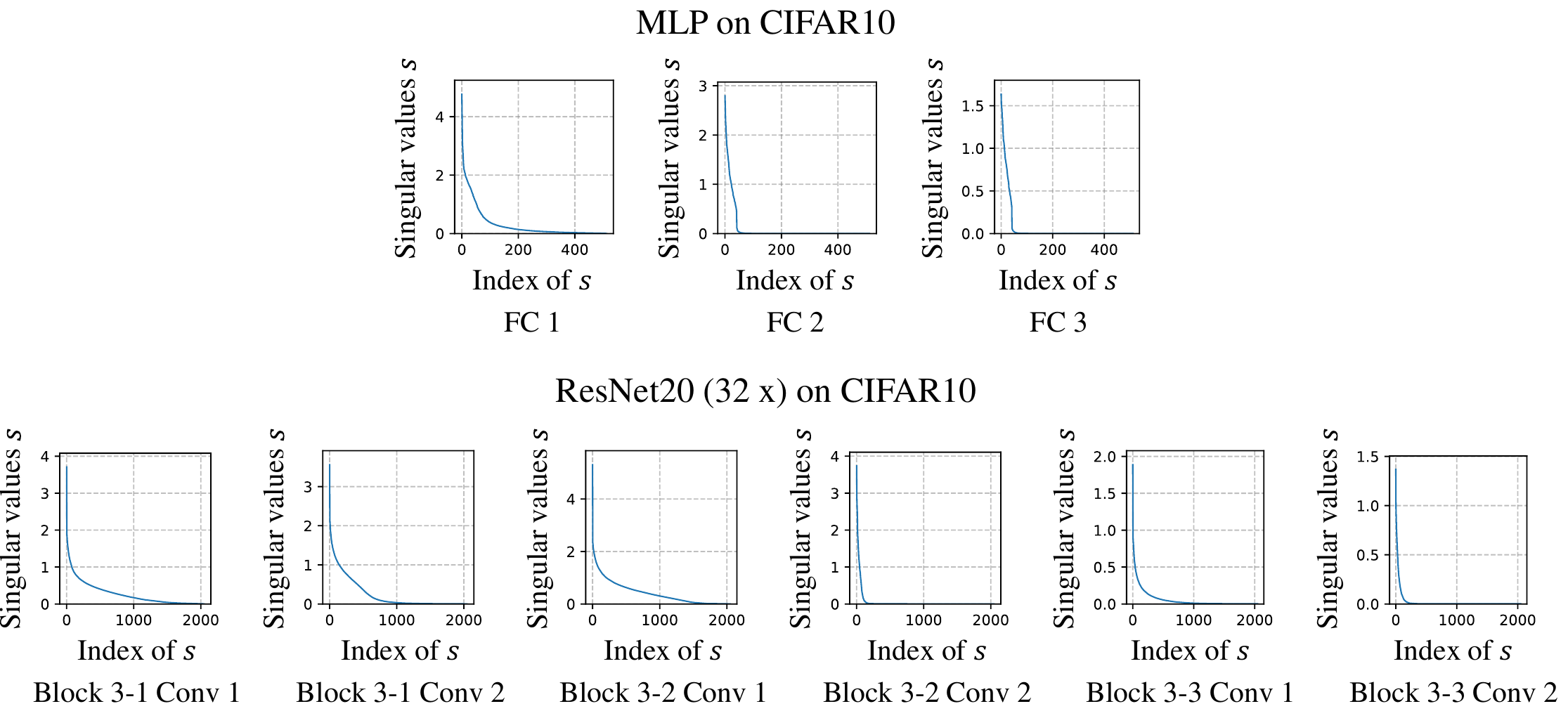}
    \caption{Singular values of weight matrix $\boldsymbol{W}^{(\ell)}$ of $\ell$-th layer of $\boldsymbol{\theta}$ in a descending order. Here, $\boldsymbol{\theta}$ can be used to achieve LMC with weight matching.The results are presented for different layers of various model architectures and datasets.}
    \label{fig:suppl-low-rank-wgt}
\end{figure}

\begin{figure}[tb!]
    \centering
    \includegraphics[width=0.447\textwidth]{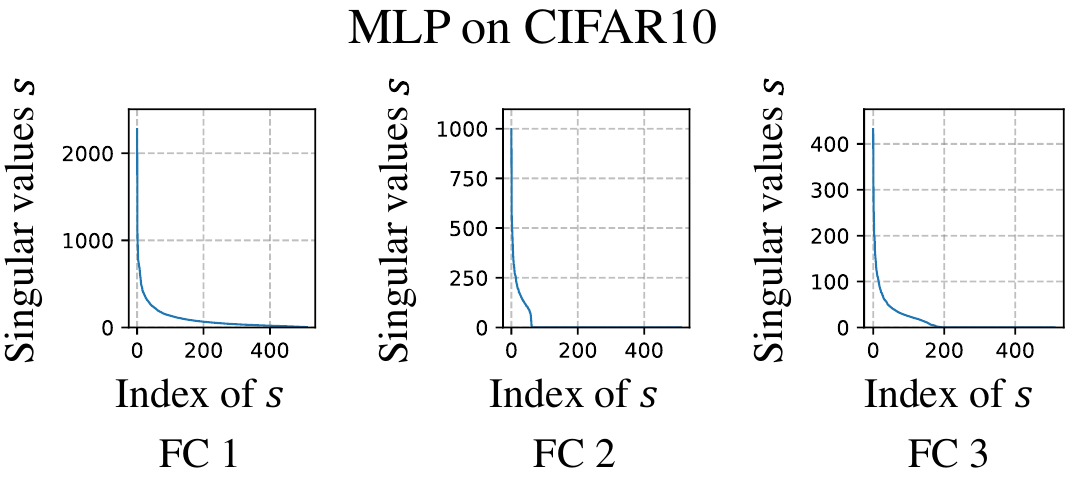}
    \caption{Singular values of post-activations $\boldsymbol{H}^{(\ell)}$ of $\ell$-th layer of $\boldsymbol{\theta}$ over the whole test set $\mathcal{D}$ in a descending order. Here, $\boldsymbol{\theta}$ can be used to achieve LMC with activation matching.The results are presented for different layers of MLP on the CIFAR-10 dataset.}
    \label{fig:suppl-low-rank-act}
\end{figure}

Low-rank model weights and activations contribute to ensure the commutativity property. We now consider a stronger form of the commutativity property, where given two modes $\boldsymbol{\theta}_A$ and $\boldsymbol{\theta}_B$ and a dataset $\mathcal{D}$, we have: \begin{align*}
    \forall \ell \in [L], \boldsymbol{W}_A^{(\ell)} \boldsymbol{H}_A^{(\ell-1)}  = \boldsymbol{W}_A^{(\ell)} \boldsymbol{H}_B^{(\ell-1)} \wedge \boldsymbol{W}_B^{(\ell)} \boldsymbol{H}_B^{(\ell-1)}  = \boldsymbol{W}_B^{(\ell)} \boldsymbol{H}_A^{(\ell-1)}.
\end{align*} Thus, to satisfy the commutativity property for a given layer $\ell$, we can employ the permutation method to find a permutation matrix $\boldsymbol{P}^{(\ell-1)}$ such that: \begin{align*}
    \boldsymbol{W}_A^{(\ell)} \left(\boldsymbol{H}_A^{(\ell-1)} - \boldsymbol{P}^{(\ell-1)}\boldsymbol{H}_B^{(\ell-1)}\right) = 0 \wedge \boldsymbol{P}^{(\ell)}\boldsymbol{W}_B^{(\ell)} \left(\boldsymbol{H}_B^{(\ell-1)} - {\boldsymbol{P}^{(\ell-1)}}^{\top}\boldsymbol{H}_A^{(\ell-1)}\right) = 0.
\end{align*} In a homogeneous linear system $\boldsymbol{W}\boldsymbol{X} = 0$, a low-rank matrix $\boldsymbol{W}$ allows for a larger solution space for $\boldsymbol{X}$. Therefore, if the ranks of $\boldsymbol{W}_A^{(\ell)}$ and $\boldsymbol{W}_B^{(\ell)}$ are low, it becomes easier to find a permutation matrix $\boldsymbol{P}^{(\ell-1)}$ that satisfies the commutativity property. Similarly, if we consider another form of commutativity property: \begin{align*}
    \forall \ell \in [L], \boldsymbol{W}_A^{(\ell)} \boldsymbol{H}_A^{(\ell-1)}  = \boldsymbol{W}_B^{(\ell)} \boldsymbol{H}_A^{(\ell-1)} \wedge \boldsymbol{W}_B^{(\ell)} \boldsymbol{H}_B^{(\ell-1)}  = \boldsymbol{W}_A^{(\ell)} \boldsymbol{H}_B^{(\ell-1)}.
\end{align*} Then, to ensure the commutativity property, we need to find $\boldsymbol{P}^{(\ell-1)}$ and $\boldsymbol{P}^{(\ell)}$ such that \begin{align*}
    \left(\boldsymbol{W}_A^{(\ell)} - \boldsymbol{P}^{(\ell)}\boldsymbol{W}_B^{(\ell)}{\boldsymbol{P}^{(\ell-1)}}^{\top} \right) \boldsymbol{H}_A^{(\ell-1)} = 0 \wedge \left(\boldsymbol{P}^{(\ell)}\boldsymbol{W}_B^{(\ell)}{\boldsymbol{P}^{(\ell-1)}}^{\top} - \boldsymbol{W}_A^{(\ell)}\right)\boldsymbol{P}^{(\ell-1)} \boldsymbol{H}_B^{(\ell-1)} = 0.
\end{align*}Then, if the ranks of $\boldsymbol{H}_A^{(\ell-1)}$ and $\boldsymbol{H}_B^{(\ell-1)}$ are low, it is easier to find the permutation matrices to satisfy the condition. In real scenarios, both model weights (see \cref{fig:suppl-low-rank-wgt}) and activations (see \cref{fig:suppl-low-rank-act}) are approximately low-rank, which helps the permutation methods satisfy the commutativity property.

\begin{figure}[tb!]
    \centering
    \includegraphics[width=0.829\textwidth]{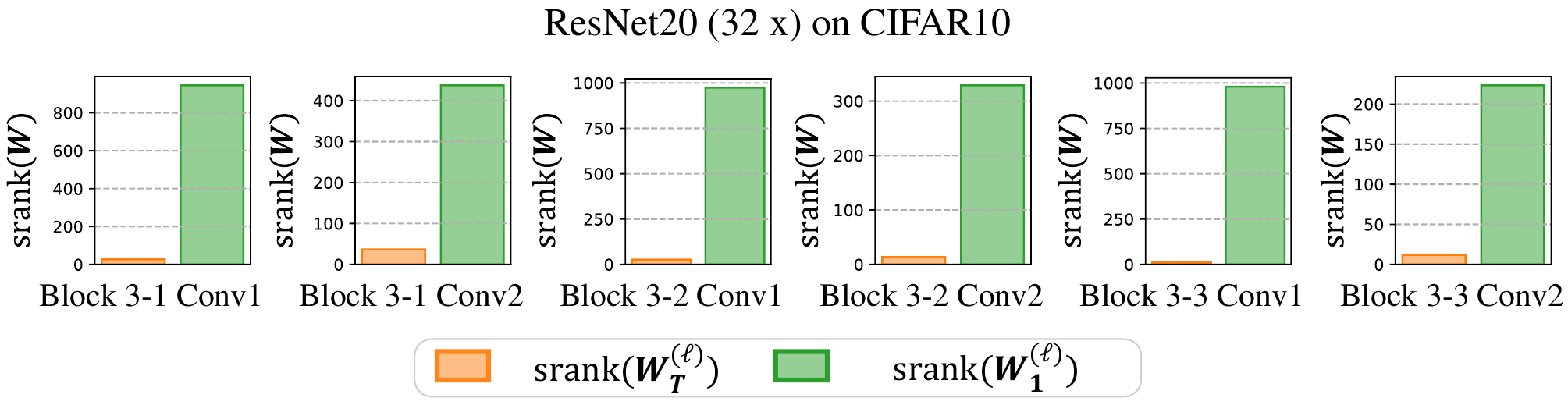}
    \caption{Comparion between the stable rank ${\rm srank}(\boldsymbol{W}_{T}^{(\ell)})$ and ${\rm srank}(\boldsymbol{W}_{1}^{(\ell)})$. Here, $\boldsymbol{W}_{T}^{(\ell)}$ denotes the weight matrix of the $\ell$-th layer of the model $\boldsymbol{\theta}_T$ in the terminal phase of training. Similarly, $\boldsymbol{W}_{1}^{(\ell)}$ denotes the weight matrix of the $\ell$-th layer of the model $\boldsymbol{\theta}_1$ in the early stage of training ($1$ epoch indeed). Also, the stable rank can be calculated as ${\rm srank}(\boldsymbol{W}) = \tfrac{\|\boldsymbol{W}\|_F^2}{\|\boldsymbol{W}\|_2^2}$. The results are presented for different layers of ResNet-20 (32x) on the CIFAR-10 dataset.}
    \label{fig:suppl-stable-rank}
\end{figure}

Additionally, \citet{ainsworth2023git} mentioned two instances where permutation methods can fail: models with insufficient width and models in the early stages of training. In both cases, the model weights often fail to satisfy the low-rank model weight condition. In the first scenario, when the model lacks sufficient width, meaning that the dimension of the weight matrix approaches the rank of the weight matrix, the low-rank condition may not be met. For example, compared the singular values of ResNet-20 (32x) (see \cref{fig:suppl-low-rank-wgt}) with singular values of ResNet-20 (1x) (see \cref{fig:suppl-subspace_angle-1}), it is evident that in the wider architecture, the proportion of salient singular values is smaller. In the second scenario, during the initial stages of training, the weight matrices resemble random matrices and may not exhibit low-rank characteristics. For example, as shown in \cref{fig:suppl-stable-rank}, the stable ranks of weight matrices of the model after convergence are significantly smaller than those of the model in the early stage of training. Consequently, permutation methods may struggle to find suitable permutations that fulfill the commutativity property, resulting in the inability to obtain modes that satisfy LMC.

%%%%%%%%%%%%%%%%%%%%%%%%%%%%%%%%%%%%%%%%%%%%%%%%%%%%%%%%%%%%

\end{document}